\documentclass[lettersize,journal]{IEEEtran}
\usepackage{amsmath,amssymb,amsfonts}
\usepackage{amsthm}
\usepackage{algorithmic}
\usepackage{array}
\usepackage[caption=false,font=normalsize,labelfont=sf,textfont=sf]{subfig}
\usepackage{textcomp}
\usepackage{stfloats}
\usepackage{url}
\usepackage{bbm}
\usepackage{cite}
\newtheorem{lemma}{Lemma}
\newtheorem{theorem}{Theorem}
\usepackage{comment}
\usepackage{tikz}
\usepackage{subcaption} 
\usetikzlibrary{positioning}
\newtheorem{assumption}{Assumption}
\usepackage{verbatim}
\usepackage{graphicx}
\usepackage{algorithmic}
\usepackage{algorithm}
\usepackage{subcaption}
\def\BibTeX{{\rm B\kern-.05em{\sc i\kern-.025em b}\kern-.08em
    T\kern-.1667em\lower.7ex\hbox{E}\kern-.125emX}}
\usepackage{balance}

\begin{document}
\title{Decentralized Multitask Learning over Learned Task Graphs}
\author{Zirui Wan, and Stefan Vlaski
\thanks{Zirui Wan and Stefan Vlaski are with the Department of Electrical and Electronic Engineering, Imperial College London, UK (e-mail: zirui.wan22@imperial.ac.uk; s.vlaski@imperial.ac.uk).}}
\allowdisplaybreaks

\maketitle

\begin{abstract}
This paper investigates decentralized multitask learning over networks when the underlying task relationships are unknown. While existing graph-regularized multitask frameworks typically assume a known structure, practical settings often require learning inter-task dependencies directly from distributed data. We propose a decentralized two-phase strategy that first estimates a generalized graph Laplacian from noisy non-cooperative stochastic gradient iterates, and subsequently exploits the learned graph to enable cooperative multitask diffusion learning. This framework is motivated by a Gaussian Markov random field prior, which gives rise to a decentralized maximum likelihood estimator for the graph Laplacian. 
The analysis quantifies the Laplacian estimation error and its propagation to the steady-state performance of the multitask diffusion recursion, and introduces a topology sensitivity index to capture the effect of network heterogeneity.
Simulation results corroborate the theoretical findings and demonstrate that cooperation enabled by the learned task graph significantly improves performance over non-cooperative learning, while approaching the true-graph baseline when the estimation stepsize is sufficiently small.
\end{abstract}

\begin{IEEEkeywords}
Decentralized learning, multitask learning, graph signal processing, graph learning, Gaussian graphical models.
\end{IEEEkeywords}

\section{Introduction}
\IEEEPARstart{M}{odern} distributed networks generate large volumes of data, motivating federated and decentralized learning frameworks that move storage and computation toward the network edge. Early works primarily focused on \emph{consensus} or \emph{single-task} formulations, where all agents are required to agree on a common global model or decision~\cite{Consensus1,Consensus2,Adaptation,EXTRA_consensus,fed_consensus,fed_consensus2}. In many practical applications, however, agents face heterogeneous objectives and non-identical local data distributions, making strict consensus statistically suboptimal~\cite{Multitask1,Multitaks2}. To address such heterogeneity, more recent works have developed implicit or non-parametric approaches, including meta-learning~\cite{meta1,meta2} and proximal personalization methods~\cite{probability1,probability2}. While effective for personalization, these approaches capture task relationships only implicitly through shared parameters, and therefore offer limited interpretability of the underlying task dependencies. In many settings, however, tasks exhibit structured pairwise relationships that are unknown \emph{a priori} and must be inferred directly from distributed data. This challenge has motivated the study of \emph{multitask learning over graphs}, where each agent maintains its own model while leveraging inter-task relationships through explicit structural priors.


A principled approach to incorporating prior knowledge about task relationships into multitask learning is through structured regularization. Consider a networked learning scenario in which each agent $k$ seeks to estimate a parameter vector by minimizing its individual cost function:
\begin{equation}
    w_k^{o}=\operatorname*{arg\,min}_{w_k}\; J_k(w_k).
\end{equation}
When information about inter-task dependencies is available, it can be embedded into the learning process by augmenting the objective with a regularization term that promotes desirable structure in the solution~\cite{kernels,regularization,stefan_reg1}. This leads to the regularized multitask formulation
\begin{equation}
\operatorname*{arg\,min}_{{\scriptstyle \mathcal{W}}}
\;\mathcal{J}({\scriptstyle \mathcal{W}})
\;+\;
\frac{\eta}{2}\,{\scriptstyle \mathcal{W}}^\top \mathcal{R}\,{\scriptstyle \mathcal{W}},
\label{eq:multitask_regularized}
\end{equation}
where ${\scriptstyle \mathcal{W}}\triangleq\operatorname{col}\{w_{1},\ldots,w_{K}\}$ concatenates the local parameters $w_k\in\mathbb{R}^M$ from $K$ agents, 
$\mathcal{J}({\scriptstyle \mathcal{W}})\triangleq \sum_{k=1}^{K} J_k(w_k)$ denotes the aggregate network cost, $\mathcal{R}\geq 0$ encodes structural relationships among tasks, and $\eta>0$ controls the strength of coupling across agents.

In networked settings, such prior structure is often expressed through \emph{smoothness} of the model parameters over an underlying task graph, whereby related agents are encouraged to learn similar parameter vectors. This notion can be captured by choosing $\mathcal{R}=\mathcal{L}\triangleq L\otimes I_M$, where $L$ is a graph Laplacian that encodes pairwise task similarities. Under this choice, the regularizer $\frac{\eta}{2}{\scriptstyle \mathcal{W}}^\top \mathcal{L}{\scriptstyle \mathcal{W}}$ promotes smooth variation of the parameter vectors across the graph, enabling the algorithm to exploit structural dependencies while preserving local adaptability. Moreover, this graph-regularized formulation admits a probabilistic interpretation through Gaussian Markov random field (GMRF) priors, where the Laplacian serves as the precision matrix governing conditional dependencies among tasks over graph~\cite{GMRF1,GMRF2,GMRF3}. Such graph-based multitask frameworks have been shown to significantly improve estimation accuracy and tracking performance over independent or consensus-based approaches, particularly in heterogeneous environments where task similarity is localized rather than global~\cite{Multitask1,Multitaks2,stefan_reg1,stefan_reg2, wan}.

\begin{figure}[t!]
    \centering
    \includegraphics[width=1\linewidth]{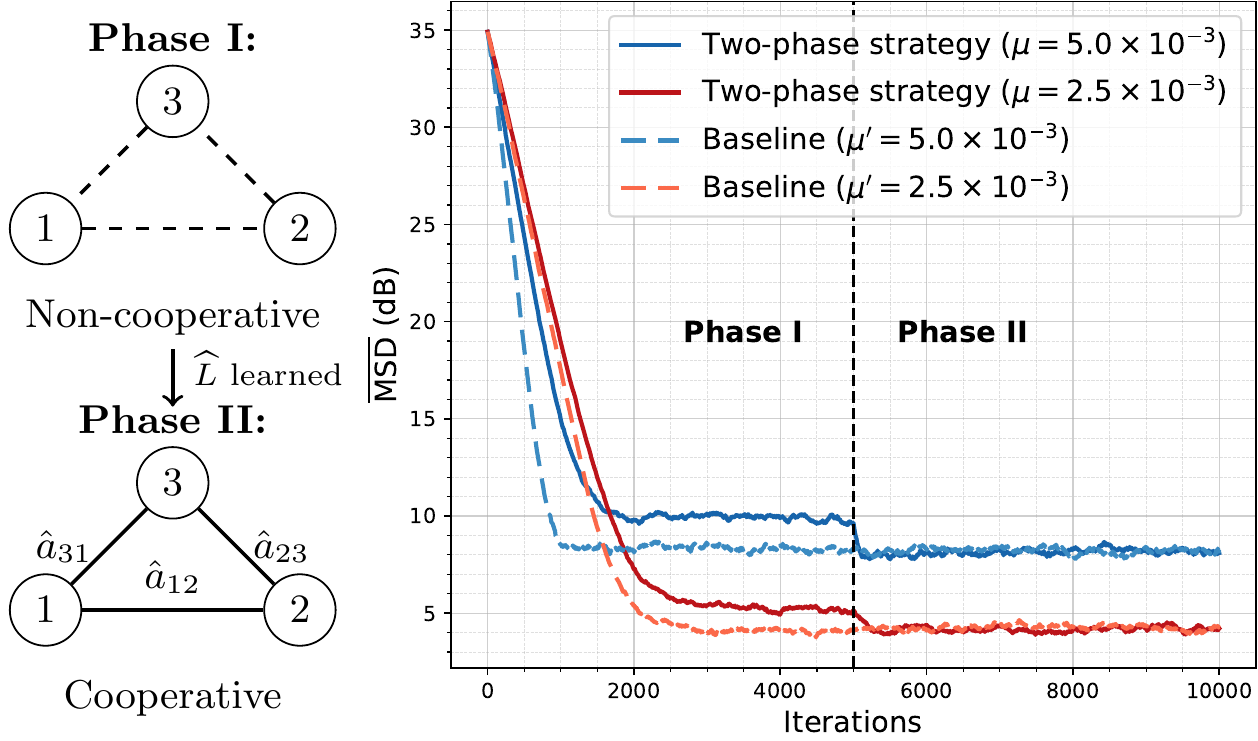}
    \caption{(Left) Illustration of the proposed two-phase strategy: Phase~I, agents operate non-cooperatively to learn the underlying task graph, and Phase~II uses the learned graph for cooperative multitask learning. (Right) Network mean-square deviation $\overline{\mathrm{MSD}}$ versus iteration for two choices of the stepsize. After the graph is learned, the Phase~II performance approaches that of the baseline with known task relationships (dashed curves) under the same stepsizes.}
    \label{fig:illustration}
\end{figure}
However, a fundamental limitation of most existing graph-based multitask learning approaches is the assumption that the underlying task-relationship graph---including both its connectivity pattern and edge weights---is fully known a priori. In practice, this assumption is often unrealistic, since the true dependencies among tasks are typically implicit, data-driven, and may evolve over time. Consequently, specifying an accurate graph structure in advance is difficult, and the task-relationship graph must instead be inferred directly from the available data, a problem commonly referred to as \emph{graph learning}.

Moreover, even when partial structural knowledge is available, existing graph learning methods typically rely on centralized processing, requiring global covariance estimation or large-scale matrix inversion~\cite{how_to_learn_graph,GMRF3,graph_learn2,wan}. Such requirements are incompatible with decentralized settings. These challenges motivate \emph{decentralized} learning strategies that can jointly infer task relationships and perform multitask learning directly from distributed data, while ensuring statistical accuracy of the learned graph and characterizing how graph estimation errors affect the performance of the resulting multitask learning algorithm.

In this paper, we propose a fully decentralized strategy that jointly learns local models and their inter-task relationships, assuming that only the local connectivity is known. The dependencies among tasks are modeled through a GMRF, whose \emph{unknown} precision matrix is constrained to lie in the space of valid \emph{generalized graph Laplacians} (GGLs)~\cite{graph_learn2,graph_learn3}. To estimate the graph structure, we employ a decentralized Laplacian learning technique based on the noisy non-cooperative estimates of the local models~\cite{DML3}. The resulting Laplacian estimate is then incorporated into the multitask learning algorithm to promote structured cooperation among agents. The overall procedure of the proposed strategy is illustrated in Fig.~\ref{fig:illustration} (left).

Simulation results demonstrate that, despite being learned from noisy data, the estimated Laplacian enables improved performance compared to the non-cooperative baseline, as shown in Fig.~\ref{fig:illustration} (right). From a theoretical perspective, we derive explicit bounds on the Laplacian estimation error in the small-stepsize and high-dimensional regimes, revealing an $O(\mu)$ dependence on the non-cooperative learning stepsize $\mu$ and quantifying the influence of graph topology on estimation quality. We further analyze how errors in the learned graph propagate and affect the performance of the resulting decentralized multitask learning algorithm. 

\subsection{Relation to Existing Studies}

\noindent\textbf{Multitask learning over graphs:}
Prior works such as~\cite{Multitask1,network_lasso,Multitaks2} formulate learning over multitask graphs, where multiple task-specific parameter vectors are learned jointly while exploiting structural relations among tasks. The use of graph Laplacian regularization to encode inter-task similarities was introduced in~\cite{kernels,regularization}, where adaptive learning strategies were developed for streaming data settings. In~\cite{distributed_smoothness}, task parameters are modeled under a smoothness condition induced by a GMRF prior, leading to a maximum a posteriori (MAP) formulation. Building on this framework,~\cite{stefan_reg1,stefan_reg2} analyse iterative solutions and provide detailed performance characterizations, showing how network topology, stepsize, and regularization strength influence the steady-state behavior of multitask learning algorithms.

Another line of work adopts implicit, non-parametric strategies such as distributed meta-learning~\cite{meta1,meta2}, which cooperatively learn shared meta-parameters that can be rapidly adapted to heterogeneous tasks. Proximal personalization methods take a related but distinct approach by allowing each agent to learn a task-specific model while regularizing its deviation from a shared reference model. For example,~\cite{proximal1} learns personalized local models by regularizing them toward a shared global reference model. In~\cite{proximal2}, client drift is mitigated by encouraging each local model to remain close to the global reference while moving away from its previous local model.

A common limitation of the aforementioned approaches is that task relationships are either assumed known a priori, as in many graph-based multitask methods, or are encoded only implicitly through shared parameters or adaptation mechanisms, as in meta-learning and proximal personalization approaches. In contrast, our work does not require prior knowledge of the graph weights; instead, we assume only local connectivity among agents and explicitly infer the task relationships from distributed data.

\textbf{Graph learning:}
We consider the problem of learning the unknown edge weights of a graph Laplacian from data. Early works such as~\cite{how_to_learn_graph,GMRF3} estimate the graph Laplacian in a centralized manner from graph signals generated under a GMRF prior. The work in~\cite{graph_learn2} studies the estimation of several classes of graph Laplacians, including GGLs, diagonally dominant generalized graph Laplacians (DDGLs), and combinatorial graph Laplacians (CGLs), under structural constraints. More recently,~\cite{DML1,DML2,DML3,graph_learn3} investigate decentralized graph Laplacian estimation based on marginal likelihood formulations derived from the Schur complement of the marginal covariance~\cite{schur}, enabling graph recovery using only local information.
In this work, we adopt the GGL model for the graph Laplacian, whose positive definiteness guarantees matrix invertibility and enables the use of the Schur complement identity. GGLs have also been shown effective for modeling structured signals such as images and videos~\cite{GGL1,GGL2}. Our method builds on maximizing the marginal likelihood (MML) approach proposed in~\cite{graph_learn3}. The key distinction of our work is that the graph is estimated from noisy graph signal observations arising from stochastic learning dynamics, and we provide an analysis of how this noise results Laplacian estimation errors.

\textbf{Joint learning of models and graphs:}  
The framework in~\cite{joint_fed} operates in a federated setting with centralized coordination, where a server aggregates updates and optimizes shared variables, whereas our approach is fully decentralized and relies solely on local neighbor interactions. Earlier related work in~\cite{Clustering} studies distributed clustering and learning over networks, where agents adaptively infer which neighbors share similar objectives and should cooperate; however, it focuses on clustering formation rather than learning an explicit weighted task-relationship graph. Works such as~\cite{joint1,joint2,joint3} study fully decentralized personalized learning, where task-specific models and a collaboration graph are jointly learned via alternating optimization, with the graph inferred from model similarity. In contrast, our work adopts a statistical formulation based on a Gaussian Markov random field, estimates the graph as a generalized graph Laplacian from noisy learning iterates, and provides explicit finite-sample bounds on Laplacian estimation quality. The most closely related work is our prior study~\cite{wan}; however, that method is not decentralized and does not provide convergence or performance guarantees for the resulting multitask learning algorithm.

\subsection{Summary of Contributions}

\noindent The main contributions of this work are summarized as follows:

\begin{itemize}
\item \textbf{Decentralized joint graph and multitask learning:}
We propose a two-phase decentralized framework that jointly learns task relationships and performs graph-regularized multitask adaptation using only local connectivity information.

\item \textbf{Laplacian estimation from noisy learning dynamics:}
We develop a decentralized Laplacian learning method based on noisy non-cooperative stochastic iterates and establish finite-sample error bounds for the estimated graph.

\item \textbf{Error propagation analysis:}
We quantify how graph estimation errors affect the steady-state performance of the resulting decentralized multitask learning algorithm.
\end{itemize}
Taken together, these provide an end-to-end framework for decentralized multitask learning with performance guarantees despite the lack of prior knowledge of task relationships.
\subsection{Notation}
\noindent Throughout the paper, all vectors are column vectors. Random quantities are in boldface; matrices are uppercase, and vectors/scalars are lowercase. $\otimes$ denotes the Kronecker product, $\operatorname{diag}(\cdot)$ constructs a block-diagonal matrix, and $\operatorname{col}\{\cdot\}$ stacks the columns of a matrix into a vector. The notation $\|\cdot\|$ refers to the spectral norm for matrices and the $\ell_2$-norm for vectors.

\section{Models and Proposed Algorithm}
\subsection{GMRF Prior for Model Parameters}
\noindent Our proposed graph learning formulation is motivated by the assumption that the true model parameters ${\scriptstyle \mathcal{W}^o}$ arise as samples from a zero-mean $M$-variate GMRF. Under this model, each agent $k$ from the set $\mathcal{V}$ has a parameter $w_k^o \in \mathbb{R}^M$ treated as a Gaussian random vector, and the relationships among agents are encoded by the edges of a connected, undirected graph. Each edge $(k,\ell)\in \mathcal{E}$ is assigned a nonnegative weight $a_{k\ell}$ that reflects the similarity between agents $k$ and $\ell$. The neighborhood of node $k$ is defined as  $\mathcal{N}_k \triangleq \{\, \ell \mid (k,\ell) \in \mathcal{E} \,\}\cup\{k\}$.

In this work, the graph Laplacian matrix $L$ is restricted to the class of GGLs, defined as
\begin{equation}
    L \triangleq D - A + V,
    \label{eq:Laplacian}
\end{equation}
where $A$ is a symmetric weighted adjacency matrix with entries $A_{k\ell}=a_{k\ell}$, $D=\mathrm{diag}(d_1,\ldots,d_K)$ is the degree matrix with $d_k=\sum_{\ell \in \mathcal{N}_k} a_{k\ell}$, and $V=\mathrm{diag}(v_1,\ldots,v_K)$ is a diagonal self-loop matrix with nonnegative scalars $v_k$. By construction, $L$ is symmetric and positive definite.

\begin{assumption}[GMRF model]
\label{assumption:GMRF}
The true parameter vector ${\scriptstyle \mathcal{W}^o} \triangleq \mathrm{col}\{w_1^o,\ldots,w_K^o\}$ is assumed to follow a multivariate Gaussian distribution:
\begin{equation}
     \boldsymbol{{\scriptstyle \mathcal{W}}}^o \sim \mathcal{N}\{0,\,\mathit{\Sigma}\},
     \label{eq:GMRF}
\end{equation}
where $\mathit{\Sigma} = \Sigma\otimes I_M \triangleq L^{-1} \otimes I_M$. The Kronecker form means that the same graph-induced dependency encoded by $L^{-1}$ is repeated across all $M$ coordinates of the parameter vectors. In other words, the inter-agent dependency structure is shared across features.
\end{assumption}
The probability density function is given by
\begin{equation}
f_{{\scriptstyle \mathcal{W}^o}}({\scriptstyle \mathcal{W}^o})
=
\frac{
\exp\!\left( -\tfrac{1}{2} ({\scriptstyle \mathcal{W}^o})^\top \mathcal{L}\, {\scriptstyle \mathcal{W}^o} \right)
}{
\sqrt{\det(2\pi \mathcal{L}^{-1})}
},
\label{eq:probability_density}
\end{equation}
where $\det(\cdot)$ denotes the determinant operator.
The quadratic form in \eqref{eq:probability_density} admits the explicit decomposition
\begin{equation}
    ({\scriptstyle \mathcal{W}^o})^\top \mathcal{L}\, {\scriptstyle \mathcal{W}^o}
=
\frac{1}{2}\sum_{k=1}^{K}\sum_{\ell \in \mathcal{N}_k}
a_{k\ell}\,\lVert w_k^o - w_\ell^o \rVert^2
\;+\;
\sum_{k=1}^{K} v_k \lVert w_k^o \rVert^2.
\end{equation}
This representation reveals that the GMRF prior promotes smoothness of the parameter vectors over the underlying graph, while the diagonal self-loop terms $v_k$ control the variance of each local model.

\subsection{Decentralized Multitask Learning}
\noindent Under the GMRF prior~\eqref{eq:GMRF}, the MAP estimate of 
${\scriptstyle \mathcal{W}^o}$ naturally leads to a Laplacian-regularized optimization problem. 
Specifically, the MAP estimator is defined as~\cite{distributed_smoothness}
\begin{align}
        {\scriptstyle \mathcal{W}}_i^{*} 
        &= \operatorname*{arg\,min}_{{\scriptstyle \mathcal{W}}} 
        \Big\{-\log f_{\{\boldsymbol{{\scriptstyle \mathcal{X}}}_j\}_{j=1}^i|{\scriptstyle \boldsymbol{\mathcal{W}}^o}}\left(\{{\scriptstyle \mathcal{X}}_j\}_{j=1}^i \mid {\scriptstyle \mathcal{W}}\right) 
        - \log f({\scriptstyle \mathcal{W}})\Big\}\label{eq:MAP}\\
        &= \operatorname*{arg\,min}_{{\scriptstyle \mathcal{W}}} \;
        \mathcal{Q}\left({\scriptstyle \mathcal{W}};\{{\scriptstyle \mathcal{X}}_j\}_{j=1}^i\right) 
        + \tfrac{1}{2} {\scriptstyle \mathcal{W}}^\top \mathcal{L}\,{\scriptstyle \mathcal{W}}\label{eq:map cost},
\end{align}
where $\{{\scriptstyle \mathcal{X}}_j\}_{j=1}^i$ denotes the collection of observations available across all agents up to time~$i$. 
Here, the stochastic loss function is defined as
\begin{equation}
    \mathcal{Q}\left({\scriptstyle \mathcal{W}};\{{\scriptstyle \mathcal{X}}_j\}_{j=1}^i\right)
\triangleq
-\log f_{\{\boldsymbol{{\scriptstyle \mathcal{X}}}_j\}_{j=1}^i|{\scriptstyle \boldsymbol{\mathcal{W}}^{o}}}\left(\{{\scriptstyle \mathcal{X}}_j\}_{j=1}^i \mid {\scriptstyle \mathcal{W}}\right),
\end{equation}
and substituting the GMRF prior~\eqref{eq:GMRF} into~\eqref{eq:MAP} yields the regularized multitask formulation.

The corresponding risk function is defined as the expected loss
\begin{equation}
   \mathcal{J}({\scriptstyle \mathcal{W}})
\triangleq
\mathbb{E}_{\boldsymbol{{\scriptstyle \mathcal{X}}}}\!\left[
\mathcal{Q}({\scriptstyle \mathcal{W}};\boldsymbol{{\scriptstyle \mathcal{X}}})
\right] = \sum_{k=1}^K\mathbb{E}_{\boldsymbol{x}_k}\!\left[
\mathcal{Q}(w_k;\boldsymbol{x}_k)
\right] = \sum_{k=1}^K J_k(w),
\label{eq: cost function}
\end{equation}
which motivates the following risk minimization problem:
\begin{equation}
{\scriptstyle \mathcal{W}}^{o}_{L}
=
\operatorname*{arg\,min}_{{\scriptstyle \mathcal{W}}}
\;
\mathcal{J}({\scriptstyle \mathcal{W}})
+
\frac{\eta}{2}\,{\scriptstyle \mathcal{W}}^\top \mathcal{L}\,{\scriptstyle \mathcal{W}},
\label{eq:multitask_cost}
\end{equation}
where $\eta>0$ controls the strength of the regularization, and ${\scriptstyle \mathcal{W}}^{o}_{L}$ denotes the optimizer of the Laplacian-regularized problem.
Problem~\eqref{eq:multitask_cost} coincides with the multitask learning formulation in~\eqref{eq:multitask_regularized}.

We are interested in solving~\eqref{eq:multitask_cost} in a stochastic and decentralized setting, where the data distribution is unknown and, therefore, the exact risk function and its gradient $\nabla \mathcal{J}({\scriptstyle \mathcal{W}})$ are unavailable. In this case, the agents can rely on stochastic gradient approximations and implement a diffusion-type recursion~\cite{Adaptation,stefan_reg1,stefan_reg2}:
\begin{equation}
\boldsymbol{{\scriptstyle \mathcal{W}}}_{L,i}
=
\left(I_{MK}-\mu^{\prime}\eta\mathcal{L}\right)
\left(
\boldsymbol{{\scriptstyle \mathcal{W}}}_{L,i-1}
-
\mu^{\prime}\,\widehat{\nabla \mathcal{J}}
\!\left(\boldsymbol{{\scriptstyle \mathcal{W}}}_{L,i-1}\right)
\right),
\end{equation}
where $\mu^\prime>0$ is a constant stepsize and $\widehat{\nabla \mathcal{J}}(\cdot)$ denotes a stochastic approximation of the gradient. Since the Laplacian $\mathcal{L}$ is sparse, the recursion is naturally \emph{decentralized}.

In practice, however, the true Laplacian $\mathcal{L}$ is unknown and must be replaced by an estimate $\widehat{\mathcal{L}}$. This leads to the following learning recursion:
\begin{equation}
\boldsymbol{{\scriptstyle \mathcal{W}}}_{\widehat{L},i}
=
\left(I_{MK}-\mu^{\prime}\eta\widehat{\mathcal{L}}\right)
\left(
\boldsymbol{{\scriptstyle \mathcal{W}}}_{\widehat{L},i-1}
-
\mu^{\prime}\,\widehat{\nabla \mathcal{J}}
\!\left(\boldsymbol{{\scriptstyle \mathcal{W}}}_{\widehat{L},i-1}\right)
\right),
\label{eq:DGD_recursion}
\end{equation}
which is the recursion studied in this work.

\subsection{Non-Cooperative Phase for Laplacian Estimation}
\noindent In practice, the graph Laplacian $L$ is unknown and must be inferred from data under the structural prior in~\eqref{eq:GMRF}.
Since the true parameter vector ${\scriptstyle \mathcal{W}}^{o}$, which encodes inter-task relationships, is not directly observable, each agent first constructs a local estimate in a non-cooperative manner, without access to $L$.
Specifically, agent $k$ runs an independent stochastic gradient descent recursion of the form:
\begin{equation}
\boldsymbol{w}_{k,i}
=
\boldsymbol{w}_{k,i-1}
-
\mu\,\widehat{\nabla J}_k(\boldsymbol{w}_{k,i-1}),
\label{eq:GD recursion}
\end{equation}
using only its own data. $\mu>0$ is a constant stepsize. A standard asymptotic result for recursion~\eqref{eq:GD recursion} is stated next and will be used in the analysis of the following section.

\begin{lemma}[Steady-state Error Covariance~\cite{Adaptation,Clustering}]
\label{Lemma: Steady-state Error Covariance}
For a sufficiently small stepsize $\mu$, the recursion in~\eqref{eq:GD recursion} satisfies, at steady-state,
\begin{equation}
    \lim_{\mu\to0}\limsup_{i\to\infty}\left\|
    \frac{1}{\mu}\mathbb{E}\!\left[
    \widetilde{\boldsymbol{{\scriptstyle \mathcal{W}}}}_{i}
    \widetilde{\boldsymbol{{\scriptstyle \mathcal{W}}}}_{i}^{\top}
    \right]-\Pi
    \right\|^2=0,
\end{equation}
where
\[
\widetilde{\boldsymbol{{\scriptstyle \mathcal{W}}}}_{i}
\triangleq
{\scriptstyle \mathcal{W}}^o-\boldsymbol{{\scriptstyle \mathcal{W}}}_{i} .
\]
That is, the steady-state error covariance scales as $\mu\Pi$, where $\Pi \in \mathbb{R}^{MK\times MK}$ is a symmetric positive semidefinite matrix characterized as the unique solution of a continuous Lyapunov equation~\cite{Clustering}.
\end{lemma}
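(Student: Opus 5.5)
Since the recursion~\eqref{eq:GD recursion} is non-cooperative, I will treat the network error $\widetilde{\boldsymbol{{\scriptstyle \mathcal{W}}}}_i$ as the stack of $K$ decoupled single-agent stochastic-gradient errors and follow the standard small-stepsize argument of~\cite{Adaptation,Clustering}. I will work conditionally on the realization ${\scriptstyle \mathcal{W}}^o$ drawn from the GMRF prior, since the learning dynamics see it as a fixed minimizer of $\mathcal{J}$, and I will rely on the usual assumptions for this setting: each $J_k$ is $\nu$-strongly convex with $\delta$-Lipschitz gradients, the Hessian is Lipschitz near $w_k^o$, and the gradient noise $\boldsymbol{s}_{k,i}(w)\triangleq\widehat{\nabla J}_k(w)-\nabla J_k(w)$ is a martingale difference. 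I also assume the noise has a bounded fourth moment, its conditional covariance $R_{s,k}(w)$ is Lipschitz-type continuous near $w_k^o$, and $R_s\triangleq\lim R_{s,k}(w_k^o)$ exists.

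\textbf{Step 1: error recursion and mean-square stability.} Subtracting $w^o_k$ and using the mean-value theorem gives
\begin{equation*}
\widetilde{\boldsymbol{w}}_{k,i}=(I_M-\mu\boldsymbol{H}_{k,i-1})\widetilde{\boldsymbol{w}}_{k,i-1}+\mu\boldsymbol{s}_{k,i}(\boldsymbol{w}_{k,i-1}),
\end{equation*}
where $\boldsymbol{H}_{k,i-1}$ is an averaged Hessian satisfying $\nu I\le \boldsymbol{H}_{k,i-1}\le\delta I$. Strong convexity and the noise bounds give $\limsup_i\mathbb{E}\|\widetilde{\boldsymbol{w}}_{k,i}\|^2=O(\mu)$. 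The same energy argument applied to fourth powers gives $\limsup_i\mathbb{E}\|\widetilde{\boldsymbol{w}}_{k,i}\|^4=O(\mu^2)$.

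\textbf{Step 2: long-term model.} I will replace $\boldsymbol{H}_{k,i-1}$ by $H_k\triangleq\nabla^2J_k(w_k^o)$ and $\boldsymbol{s}_{k,i}(\boldsymbol{w}_{k,i-1})$ by $\boldsymbol{s}_{k,i}(w_k^o)$. Stacking over agents gives $\mathcal{H}=\operatorname{diag}\{H_k\}$ and the linear recursion
\begin{equation*}
\widetilde{\boldsymbol{{\scriptstyle \mathcal{W}}}}'_i=(I-\mu\mathcal{H})\widetilde{\boldsymbol{{\scriptstyle \mathcal{W}}}}'_{i-1}+\mu\boldsymbol{s}_i .
\end{equation*}
Using the Hessian Lipschitz bound and the fourth-moment bound, I will show that $\limsup_i\mathbb{E}\|\widetilde{\boldsymbol{{\scriptstyle \mathcal{W}}}}_i-\widetilde{\boldsymbol{{\scriptstyle \mathcal{W}}}}'_i\|^2=O(\mu^2)$. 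By Cauchy--Schwarz, the covariances of the original and long-term errors then differ by $O(\mu^{3/2})$. \textbf{This approximation step is the main obstacle.} The difficulty is controlling the cross terms between the Hessian perturbation and the $O(\sqrt{\mu})$ error with only $O(\mu^{3/2})$ slack, which is why the fourth-order moment bound is needed.

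\textbf{Step 3: Lyapunov limit.} For the long-term model, the covariance $P_i$ obeys
\begin{equation*}
P_i=(I-\mu\mathcal{H})P_{i-1}(I-\mu\mathcal{H})+\mu^2\mathcal{R}_s, \qquad \mathcal{R}_s\triangleq\operatorname{diag}\{R_{s,k}\},
\end{equation*}
which is block diagonal because the agents do not cooperate. Since $\rho(I-\mu\mathcal{H})<1$, $P_i$ converges to a fixed point $P_\infty$. Dividing the fixed-point equation by $\mu^2$ gives
\begin{equation*}
\mathcal{H}P_\infty+P_\infty\mathcal{H}-\mu\mathcal{H}P_\infty\mathcal{H}=\mu\mathcal{R}_s .
\end{equation*}
Let $\Pi$ solve $\mathcal{H}\Pi+\Pi\mathcal{H}=\mathcal{R}_s$; it is unique, symmetric and positive semidefinite because $\mathcal{H}>0$. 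Then $\|P_\infty/\mu-\Pi\|=O(\mu)$. Combined with Step 2, this yields $\|\mathbb{E}[\widetilde{\boldsymbol{{\scriptstyle \mathcal{W}}}}_i\widetilde{\boldsymbol{{\scriptstyle \mathcal{W}}}}_i^\top]/\mu-\Pi\|=O(\mu^{1/2})$ in the $\limsup$. Letting $\mu\to0$ gives the claim.
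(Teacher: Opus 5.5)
Your route (mean-value error recursion, a long-term model with the Hessian frozen at $w_k^o$, then passing from the discrete fixed-point equation to the continuous Lyapunov equation $\mathcal{H}\Pi+\Pi\mathcal{H}=\mathcal{R}_s$) is the standard argument of~\cite{Adaptation,Clustering}, which the paper cites without proof. Steps~1 and~3 are sound.

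The gap is in Step~2. There you freeze not only the Hessian but also the noise, replacing $\boldsymbol{s}_{k,i}(\boldsymbol{w}_{k,i-1})$ by $\boldsymbol{s}_{k,i}(w_k^o)$. The difference $\boldsymbol{z}_i\triangleq\widetilde{\boldsymbol{w}}_{k,i}-\widetilde{\boldsymbol{w}}'_{k,i}$ then satisfies
\[
\boldsymbol{z}_i=(I_M-\mu H_k)\boldsymbol{z}_{i-1}-\mu(\boldsymbol{H}_{k,i-1}-H_k)\widetilde{\boldsymbol{w}}_{k,i-1}+\mu\bigl(\boldsymbol{s}_{k,i}(\boldsymbol{w}_{k,i-1})-\boldsymbol{s}_{k,i}(w_k^o)\bigr).
\]
The Hessian term is fine: the Lipschitz Hessian together with $\mathbb{E}\|\widetilde{\boldsymbol{w}}_{k,i}\|^4=O(\mu^2)$ gives an $O(\mu^2)$ contribution. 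The last term, however, is a martingale difference that adds a steady-state term of order $\mu\limsup_i\mathbb{E}\|\boldsymbol{s}_{k,i}(\boldsymbol{w}_{k,i-1})-\boldsymbol{s}_{k,i}(w_k^o)\|^2$. Nothing you assume makes that second moment small. Continuity of the conditional covariance controls $\mathbb{E}\|\boldsymbol{s}_{k,i}(\boldsymbol{w}_{k,i-1})\|^2$ and $\mathbb{E}\|\boldsymbol{s}_{k,i}(w_k^o)\|^2$ separately, but says nothing about the cross-covariance between the two realizations. The fourth-moment bound only gives boundedness. So in general you only get $\limsup_i\mathbb{E}\|\boldsymbol{z}_i\|^2=O(\mu)$, the same order as $\mu\Pi$ itself, and the Cauchy--Schwarz transfer to the $1/\mu$-scaled covariance fails. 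Your version would need an extra realization-level condition such as $\mathbb{E}\|\boldsymbol{s}_{k,i}(w)-\boldsymbol{s}_{k,i}(w_k^o)\|^2\le c\|w-w_k^o\|^2$. This holds for the MSE model in the experiments but is not among the assumptions.

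The repair is the one used in~\cite{Adaptation}: keep $\boldsymbol{s}_i(\boldsymbol{{\scriptstyle \mathcal{W}}}_{i-1})$ in the long-term model and replace the noise only at the level of second moments. The pathwise comparison then involves only the Hessian perturbation, so your $O(\mu^2)$ bound holds. The long-term covariance recursion is forced by $\mu^2\,\mathbb{E}\,\mathcal{R}_{s,i}(\boldsymbol{{\scriptstyle \mathcal{W}}}_{i-1})$. The H\"older condition in Assumption~\ref{assumption:Gradient noise}, with Jensen's inequality, gives $\bigl\|\mathbb{E}\,\mathcal{R}_{s,i}(\boldsymbol{{\scriptstyle \mathcal{W}}}_{i-1})-\mathcal{R}_{s,i}({\scriptstyle \mathcal{W}}^o)\bigr\|\le k_s\bigl(\mathbb{E}\|\widetilde{\boldsymbol{{\scriptstyle \mathcal{W}}}}_{i-1}\|^4\bigr)^{\gamma_s/4}=O(\mu^{\gamma_s/2})$, and $\mathcal{R}_{s,i}({\scriptstyle \mathcal{W}}^o)\to\mathcal{R}_s$ by assumption. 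The per-step forcing perturbation $\mu^2\cdot O(\mu^{\gamma_s/2})$ is accumulated through $(I-\mu\mathcal{H})$ with gain $O(1/\mu)$. Your Step~3 then goes through with final rate $O(\mu^{\min\{1/2,\gamma_s/2\}})$ instead of $O(\mu^{1/2})$, which still vanishes as $\mu\to0$.

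A minor point: non-cooperation makes $\mathcal{H}$ block diagonal, but $\mathcal{R}_s$ (and hence $P_i$ and $\Pi$) is block diagonal only if the gradient noises are uncorrelated across agents. The assumptions do not impose this. Step~3 never uses block-diagonality, so simply work with the full $\mathcal{R}_s$.
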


To learn the graph Laplacian from the noisy estimates $\boldsymbol{w}_{k,i-1}$, the global maximum likelihood (GML) estimator proposed in~\cite{wan} is infeasible, since it requires centralized information and a global matrix inversion whose complexity scales cubically with the network size.
These limitations motivate the use of a \emph{decentralized} Laplacian estimation strategy, whereby each node infers only a local portion of the graph structure.

Let $\mathcal{N}_k^c \triangleq \mathcal{V}\setminus \mathcal{N}_k$ be the complement of the neighborhood, and let $\mathcal{S}_k$ denote the neighborhood without the the node itself $\mathcal{S}_k\triangleq\mathcal{N}_k\setminus k$. Define $\Sigma_{\mathcal{N}_k,\mathcal{N}_k}$ as the marginal covariance matrix associated with node~$k$ and its neighborhood, obtained as the corresponding principal submatrix of the global covariance matrix~$\Sigma$.

By applying the Schur complement identity~\cite{schur}, the inverse of this marginal covariance can be expressed as
\begin{align}
\Gamma_k
\triangleq
\Sigma^{-1}_{\mathcal{N}_k,\mathcal{N}_k}
&=
L_{\mathcal{N}_k,\mathcal{N}_k}
-
\begin{bmatrix}
0 \\[2pt]
L_{\mathcal{S}_k,\mathcal{N}_k^c}
\end{bmatrix}
L_{\mathcal{N}_k^c,\mathcal{N}_k^c}^{-1}
\begin{bmatrix}
0,\; L_{\mathcal{N}_k^c,\mathcal{S}_k}
\end{bmatrix}
\nonumber\\[4pt]
&=
L_{\mathcal{N}_k,\mathcal{N}_k}
-
\begin{bmatrix}
0 & 0 \\[2pt]
0 &
L_{\mathcal{S}_k,\mathcal{N}_k^c}
\,L_{\mathcal{N}_k^c,\mathcal{N}_k^c}^{-1}\,
L_{\mathcal{N}_k^c,\mathcal{S}_k}
\end{bmatrix},
\label{eq:schur_complement}
\end{align}
where the zero blocks arise because node~$k$ has no direct connections to nodes outside $\mathcal{N}_k$ in the Laplacian structure.

A key consequence of~\eqref{eq:schur_complement} is that, due to the sparsity of the Laplacian matrix~$L$, the entries in the first row and column of $\Gamma_k$ coincide exactly with those of the submatrix $L_{\mathcal{N}_k,\mathcal{N}_k}$.
As a result, the local parameters in $L_{k,\mathcal{N}_k}$ can be recovered from purely local covariance information.
These locally inferred quantities are sufficient for subsequent decentralized multitask learning, and similar constructions have been extensively studied in~\cite{DML1,DML2,DML3,graph_learn3}.

In this work, we adopt the MML formulation introduced in~\cite{graph_learn3}, which is equivalent to the GML estimator while admitting a convex optimization form.
At node~$k$, the MML problem is given by
\begin{align}
\widehat{\Gamma}_{k}
&=
\operatorname*{arg\,min}_{\Gamma}
\;
\big\langle
\widehat{\Sigma}_{\mathcal{N}_k,\mathcal{N}_k},\, \Gamma
\big\rangle
-
\log\det(\Gamma)
\nonumber\\
\text{s.t.}&\quad
\Gamma \succeq 0,
\label{eq:relaxed_local_MLE}
\end{align}
where $\widehat{\Sigma}_{\mathcal{N}_k,\mathcal{N}_k}$ denotes the local sample covariance matrix, which is a principal submatrix of $\widehat{\Sigma}$. Under a zero-mean model, $\widehat{\Sigma}$ is estimated by an empirical covariance estimator with the steady-state estimates ${\scriptstyle \mathcal{W}}_i$ from recursion~\eqref{eq:GD recursion}:
\begin{equation}
    \widehat{\Sigma}
=
\frac{1}{M}\,
\operatorname{Tr_K}\!\big(
(P\,{\scriptstyle \mathcal{W}}_i)
(P\,{\scriptstyle \mathcal{W}}_i)^\top
\big).
\end{equation}
where $\operatorname{Tr_K}(\cdot)$ denotes the block-trace operator that sums the $K\times K$ diagonal blocks, and $P\in\mathbb{R}^{MK\times MK}$ is the commutation matrix that converts the agent-wise stacking ${\scriptstyle \mathcal{W}}_i=\operatorname{col}\{w_{1,i},\ldots,w_{K,i}\}$ into an entry-wise stacking, so that the same coordinate from different agents is grouped together. In particular,
\begin{equation}
P{\scriptstyle \mathcal{W}}_i
=\operatorname{vec}\left(\,[w_{1,i},\ldots,w_{K,i}]^{\top}\right).
\end{equation}

Equivalently, $\widehat{\Sigma}_{\mathcal{N}_k,\mathcal{N}_k}$ can be estimated directly from its local stack without any global operation.

The solution to~\eqref{eq:relaxed_local_MLE} admits the closed form:
\begin{equation}
\widehat{\Gamma}_{k}
=
\widehat{\Sigma}_{\mathcal{N}_k,\mathcal{N}_k}^{-1},
\end{equation}

The Laplacian estimate is then assembled by extracting the entries consistent with the known local connectivity, thereby preserving the sparsity pattern of the Laplacian. To enforce symmetry, we follow~\cite{graph_learn3} and apply a simple local averaging step:
\begin{equation}
\left\{
\begin{aligned}
&\widehat{L}^{\mathrm{\prime}}_{k,\mathcal{N}_k}
=
\operatorname{S}(\widehat{\Gamma}_{k}),\\
&\widehat{L}^{\mathrm{\prime}}_{k,\ell}=0,
\quad \ell\notin \mathcal{N}_k,
\end{aligned}
\right.
\qquad
\widehat{L}
=
\frac{1}{2}
\bigl(
\widehat{L}^{\mathrm{\prime}}
+
(\widehat{L}^{\mathrm{\prime}})^\top
\bigr),
\label{eq:L estimate constraints}
\end{equation}
where $\operatorname{S}(\cdot)$ denotes the subselection operator that extracts the first-row entries corresponding to node~$k$.

The resulting estimator in \eqref{eq:relaxed_local_MLE}-\eqref{eq:L estimate constraints} is shown to be asymptotically consistent~\cite{graph_learn3,DML3}.
However, in the present setting the available samples are corrupted by noise, and classical asymptotic consistency arguments no longer apply directly.
In the following sections, we therefore derive explicit finite-sample error bounds that characterize the quality of the resulting Laplacian estimates and their impact on decentralized multitask learning.

An overall procedure is summarized in Algorithm~\ref{alg:alg1}.
\begin{algorithm}[H]
\caption{Decentralized Multitask Learning over Learned Task Graphs}\label{alg:alg1}
\begin{algorithmic}
\STATE \textbf{Input:} For each agent $k$: neighborhood $\mathcal{N}_k$ and data $\boldsymbol{x}_{k,i}$; initialized $w_{k,0}$; stepsizes $\mu$ and $\mu'$;  regularization strength $\eta$; iteration numbers $N_1$ and $N_2$.
\STATE
\STATE {\textsc{Phase I (Non-cooperative)}}
\STATE $ \textbf{for iterations } i=1,...,N_1\textbf{ do}$
\STATE \hspace{0.5cm}$ \textbf{for agents } k=1,...,K\textbf{ do}$
\vspace{0.1cm}
\STATE \hspace{1cm}$\widehat{\nabla J}_k(\boldsymbol{w}_{k,i-1}) \gets \nabla J_k(\boldsymbol{w}_{k,i-1};\boldsymbol{x}_{k,i})$
\STATE \hspace{1cm}$\boldsymbol{w}_{k,i}
\gets
\boldsymbol{w}_{k,i-1}
-
\mu\,\widehat{\nabla J}_k(\boldsymbol{w}_{k,i-1})$
\STATE $ \textbf{for agents } k=1,...,K\textbf{ do}$
\STATE \hspace{0.5cm}$\widehat{\boldsymbol{\Gamma}}_{k}=\bigl(
\frac{1}{M}\,
\operatorname{Tr_{|\mathcal{N}_k|}}\!\big(
(P_k\,\boldsymbol{{\scriptstyle \mathcal{W}}}_{\mathcal{N}_k,N_1})
(P_k\,\boldsymbol{{\scriptstyle \mathcal{W}}}_{\mathcal{N}_k,N_1})^\top
\big)\bigr)^{-1}$
\vspace{0.1cm}
\STATE \hspace{0.5cm}$\widehat{\boldsymbol{L}}^{\mathrm{\prime}}_{k,\mathcal{N}_k}
=
\operatorname{S}(\widehat{\boldsymbol{\Gamma}}_{k})$
\STATE \hspace{0.5cm}$\textbf{if }
 \ell\notin \mathcal{N}_k, \quad  \widehat{\boldsymbol{L}}^{\mathrm{\prime}}_{k,\ell}=0 $ 
\STATE \hspace{0.5cm}$ \textbf{if }
\ell\in \mathcal{N}_k, \quad  \widehat{\boldsymbol{L}}_{k,\ell}=\frac{1}{2}
\bigl(
\widehat{\boldsymbol{L}}^{\mathrm{\prime}}_{k,\ell}
+
\widehat{\boldsymbol{L}}^{\mathrm{\prime}}_{\ell,k}
\bigr) $
 
\STATE 
\STATE {\textsc{Phase II (Cooperative)}}
\STATE $ \textbf{for iterations } i=N_1+1,...,N_2\textbf{ do}$
\STATE \hspace{0.5cm}$ \textbf{for agents } k=1,...,K\textbf{ do}$
\vspace{0.1cm}
\STATE \hspace{1cm}$\widehat{\nabla J}_k(\boldsymbol{w}_{k,i-1}) \gets \nabla J_k(\boldsymbol{w}_{k,i-1};\boldsymbol{x}_{k,i})$
\STATE \hspace{1cm}$w_{k,i}
\gets
\left(I-\mu^{\prime}\widehat{\mathcal{L}}_k\right)
\left(
{\scriptstyle \mathcal{W}}_{\mathcal{N}_k,i-1}
-
\mu^{\prime}\,\widehat{\nabla \mathcal{J}_k}
\!\left({\scriptstyle \mathcal{W}}_{\mathcal{N}_k,i-1}\right)
\right)$
\STATE
\textbf{Return:} $\{w_{k,N_2}\}_{k=1}^K$
\end{algorithmic}
\label{alg1}
\end{algorithm}

Here, $|\mathcal{N}_k|$ denotes the cardinality of the neighborhood of agent $k$. 
The stacked vector
${\scriptstyle \mathcal{W}}_{\mathcal{N}_k,i}
\triangleq
\operatorname{col}\{\,w_{\ell,i}\,;\,\ell\in\mathcal{N}_k\}$
collects the estimates of agent $k$ and its neighbors. The matrix $P_k$ denotes the commutation matrix associated with this local stacking. The same local stacking operation is applied to $\widehat{\nabla \mathcal{J}_k}(\cdot)$. $\widehat{\mathcal{L}}_k\triangleq \widehat{L}_{\mathcal{N}_k,\mathcal{N}_k}\otimes I_M$ is a local Laplacian which encodes the edge weights on agent $k$.

\section{Estimation Quality Analysis}
\noindent To derive analytical guarantees, we introduce the following assumptions on the cost $J_k(\cdot)$ and the gradient noise process 
$\boldsymbol{{\scriptstyle \mathcal{S}}}_{i}(\cdot)$, defined as
\begin{equation}
    \boldsymbol{{\scriptstyle \mathcal{S}}}_{i}(\boldsymbol{{\scriptstyle \mathcal{W}}}_{i-1})
    = \widehat{\nabla \mathcal{J}}(\boldsymbol{{\scriptstyle \mathcal{W}}}_{i-1})
    - \nabla \mathcal{J}(\boldsymbol{{\scriptstyle \mathcal{W}}}_{i-1}).
\end{equation}
These assumptions are commonly satisfied by objective functions of interest in learning and adaptation, such as quadratic and logistic costs \cite{Adaptation}.

\begin{assumption}[Cost functions]\label{assumption:Risk function} Each individual cost $J_k(w_k)$ is assumed to be convex, twice differentiable, and with bounded Hessian satisfying:
\begin{equation}
     \nu I_M\leq\nabla^{2}J_k(w_k)\leq\delta I_M,
\end{equation}
where $0<\nu\leq\delta <\infty $.

2) The Hessian $\nabla^{2}J_k(\cdot)$ satisfies the Lipschitz condition for any $w_1$, $w_2\in\mathbb{R}^M$ and $k_H\geq0$:
\begin{equation}
    \|\nabla^{2}J_k(w_1)-\nabla^{2}J_k(w_2)\|\leq k_H\|w_1-w_2\|.
\end{equation}
\end{assumption} 

\begin{assumption}[Gradient noise]\label{assumption:Gradient noise} For any $\boldsymbol{{\scriptstyle \mathcal{W}}}_{i-1}$, gradient noise satisfies:
\begin{align}
    \mathbb{E}[\boldsymbol{{\scriptstyle \mathcal{S}}}_{i}(\boldsymbol{{\scriptstyle \mathcal{W}}}_{i-1})|\boldsymbol{{\scriptstyle \mathcal{W}}}_{i-1}] &= 0\\
     \mathbb{E}[\|\boldsymbol{{\scriptstyle \mathcal{S}}}_{i}(\boldsymbol{{\scriptstyle \mathcal{W}}}_{i-1})\|^4|\boldsymbol{{\scriptstyle \mathcal{W}}}_{i-1}] &\leq \beta^4\|{\scriptstyle \mathcal{W}}^o-\boldsymbol{{\scriptstyle \mathcal{W}}}_{i-1}\|^4+\sigma_s^4,
\end{align}
where $\beta,\, \sigma_s \geq0.$

2) The conditional covariance of $\boldsymbol{{\scriptstyle \mathcal{S}}}_{i}(\boldsymbol{{\scriptstyle \mathcal{W}}}_{i-1})$ defined as  
\[\mathcal{R}_{s,i}(\boldsymbol{{\scriptstyle \mathcal{W}}}_{i-1})\triangleq \mathbb{E}[\boldsymbol{{\scriptstyle \mathcal{S}}}_{i}(\boldsymbol{{\scriptstyle \mathcal{W}}}_{i-1})\boldsymbol{{\scriptstyle \mathcal{S}}}^\top_{i}(\boldsymbol{{\scriptstyle \mathcal{W}}}_{i-1})|\boldsymbol{\mathcal{F}}_{i-1}]\] Where $\boldsymbol{\mathcal{F}}_{i-1}$denotes the filtration corresponding to all past iterates across all agents. The conditional covariance satisfies:
\begin{align}
        \|\mathcal{R}_{s,i}(\boldsymbol{{\scriptstyle \mathcal{W}}}_{i-1})-\mathcal{R}_{s,i}({\scriptstyle \mathcal{W}}^o)\| &\leq k_s\|\boldsymbol{{\scriptstyle \mathcal{W}}}_{i-1}-{\scriptstyle \mathcal{W}}^o\|^{\gamma_s}\\
         \mathcal{R}_s \triangleq \lim_{i \to \infty}\mathcal{R}_{s,i}({\scriptstyle \mathcal{W}}^o) &> 0,
\end{align}
where $k_s\geq 0$ and $0<\gamma_s\leq4$.
\end{assumption} 

\subsection{Laplacian Estimation Error}
\begin{theorem} [Laplacian estimation error]\label{Theorem: Laplacian estimation error} Suppose Assumptions~\ref{assumption:GMRF} through~\ref{assumption:Gradient noise} hold.
Then, for a sufficiently small stepsize $\mu$ and sufficiently large $M$,
the Laplacian estimator $\widehat{L}$ satisfies
\begin{align}
&\mathbb{E}_{\boldsymbol{{\scriptstyle \mathcal{W}}}_i\mid {\scriptstyle \mathcal{W}}^o}\,
\|\widehat{L}-L\|^{2}
\leq 
\sum_{k=1}^{K}\frac{\|L_{\mathcal{N}_k,\mathcal{N}_k}\|^{4}}{(1-\zeta)^{2}}\;\epsilon,
\label{eq: L inequality}\\
&\epsilon \triangleq\;
\underbrace{
\bigl(\tau\|\Phi\|^2+\rho\left\|L^{-1}\right\|^2\bigr)
\Bigl(\frac{K}{M}+\frac{K^2}{M^2}\Bigr)
}_{\text{covariance concentration}}\\
&\hspace{5em}+\;
\underbrace{
4\mu\|{\scriptstyle \mathcal{W}}^{o}\|^2
\operatorname{Tr}\left(\frac{\Phi}{M}\right)
+3\mu^2\Bigl\|
\operatorname{Tr_K}\left(\frac{\Phi}{M}\right)
\Bigr\|^2
}_{\text{bias}}\label{eq:covariance error}.
\end{align}
Here, $0<\zeta<1$ is a constant, $\rho$ and $\tau$ are nonnegative constants, and
$\Phi \triangleq P \Pi P^\top$.
\end{theorem}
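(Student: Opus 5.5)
The bound has three layers: a structural reduction from $\widehat{L}-L$ to the local errors $\widehat{\Gamma}_k-\Gamma_k$, a perturbation bound for matrix inversion, and a covariance-error bound that splits into concentration and bias.

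\textbf{Reduction to local blocks.} Since $\widehat{L}$ keeps only first-row entries of each $\widehat{\Gamma}_k$ and symmetrizes, the error has the form $\widehat{L}-L=\tfrac12(E+E^\top)$. Here row $k$ of $E$ is $e_1^\top(\widehat{\Gamma}_k-\Gamma_k)$, placed on $\mathcal{N}_k$, and we use the identity from \eqref{eq:schur_complement} that the first row of $\Gamma_k$ equals $L_{k,\mathcal{N}_k}$. Then
\begin{equation*}
\|\widehat{L}-L\|^2\le\|E\|^2\le\|E\|_F^2\le\sum_{k=1}^K\|\widehat{\Gamma}_k-\Gamma_k\|^2 .
\end{equation*}
This gives the outer sum over $k$.

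\textbf{Inverse perturbation.} Write $\widehat{\Gamma}_k-\Gamma_k=\widehat{\Sigma}_{\mathcal{N}_k}^{-1}-\Sigma_{\mathcal{N}_k}^{-1}$ and set $\Delta_k=\widehat{\Sigma}_{\mathcal{N}_k}-\Sigma_{\mathcal{N}_k}$. The Neumann series gives
\begin{equation*}
\|\widehat{\Gamma}_k-\Gamma_k\|\le\frac{\|\Gamma_k\|^2\|\Delta_k\|}{1-\|\Gamma_k\|\|\Delta_k\|}.
\end{equation*}
This is valid on the event $\|\Gamma_k\|\|\Delta_k\|\le\zeta<1$, which holds with high probability, or in the regime assumed, for small $\mu$ and large $M$. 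Next use $\|\Gamma_k\|\le\|L_{\mathcal{N}_k,\mathcal{N}_k}\|$: the Schur correction in \eqref{eq:schur_complement} is PSD, so $\Gamma_k\preceq L_{\mathcal{N}_k,\mathcal{N}_k}$. This produces the factor $\|L_{\mathcal{N}_k,\mathcal{N}_k}\|^4/(1-\zeta)^2$. Finally $\|\Delta_k\|\le\|\widehat{\Sigma}-\Sigma\|$, so it remains to show $\mathbb{E}\|\widehat{\Sigma}-\Sigma\|^2\le\epsilon$.

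\textbf{Covariance error.} Write $\mathcal{W}_i=\mathcal{W}^o-\widetilde{\mathcal{W}}_i$. Then
\begin{equation*}
M\widehat{\Sigma}=\operatorname{Tr_K}(P\mathcal{W}^o\mathcal{W}^{o\top}P^\top)-2\operatorname{sym}\operatorname{Tr_K}(P\mathcal{W}^o\widetilde{\mathcal{W}}_i^\top P^\top)+\operatorname{Tr_K}(P\widetilde{\mathcal{W}}_i\widetilde{\mathcal{W}}_i^\top P^\top).
\end{equation*}
The three pieces are handled separately.

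The first term, divided by $M$, is the sample covariance of $M$ i.i.d. $\mathcal{N}(0,L^{-1})$ vectors, by the Kronecker structure in Assumption~\ref{assumption:GMRF}. A standard Gaussian covariance concentration bound, the Wishart second moment, gives $\mathbb{E}\|\cdot-\Sigma\|^2\le\rho\|L^{-1}\|^2(K/M+K^2/M^2)$.

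The third term, divided by $M$, has conditional mean $\mu\operatorname{Tr_K}(\Phi)/M$ to first order by Lemma~\ref{Lemma: Steady-state Error Covariance}. Its squared mean produces the $\mu^2\|\operatorname{Tr_K}(\Phi/M)\|^2$ bias term. Its fluctuation is treated as an approximately Gaussian sample covariance with covariance $\mu\Phi$, which gives a $\tau\|\Phi\|^2(K/M+K^2/M^2)$ concentration term; the factor of $\mu$ is absorbed into $\tau$ for small $\mu$.

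The cross term has conditional mean of order zero in the leading approximation. Its second moment is bounded by Cauchy--Schwarz as $\|\mathcal{W}^o\|^2\,\mathbb{E}\|\widetilde{\mathcal{W}}_i\|^2/M\approx\mu\|\mathcal{W}^o\|^2\operatorname{Tr}(\Phi)/M$, which gives the $4\mu$ term.

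Combining the pieces with $(a+b+c)^2\le3(a^2+b^2+c^2)$, together with a slightly sharper split for the cross term, yields the constants $4$ and $3$.

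\textbf{Main obstacle.} The hardest step is the third term. The error iterates $\widetilde{\mathcal{W}}_i$ are not exactly Gaussian, and the coordinates across the $M$ dimensions are not independent, so the Wishart-type concentration does not apply directly. I would first invoke the fourth-moment bound in Assumption~\ref{assumption:Gradient noise} together with the long-term linear model (Assumption~\ref{assumption:Risk function}) to approximate $\widetilde{\mathcal{W}}_i$ by a Gaussian with covariance $\mu\Pi$, up to $o(\mu)$. Handling the event $\|\Gamma_k\|\|\Delta_k\|\le\zeta$ inside the expectation also needs care; I would assume it holds for small $\mu$ and large $M$, and if it is needed, control the complement with a high-probability bound.
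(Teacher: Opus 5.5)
Your proposal follows essentially the same route as the paper: reducing $\|\widehat{L}-L\|^2$ to $\sum_k\|\widehat{\Gamma}_k-\Gamma_k\|^2$ via the Frobenius norm and non-expansiveness of symmetrization and subselection, applying the Neumann/Stewart inverse-perturbation bound under $\|\Gamma_k\|\|\Delta_k\|\le\zeta$ with $\|\Gamma_k\|\le\|L_{\mathcal{N}_k,\mathcal{N}_k}\|$ from the PSD Schur correction, and splitting $\widehat{\Sigma}-\Sigma$ into the sample-covariance error of the $w^o_m$, a cross term, and the noise term's fluctuation plus its mean $\mu\operatorname{Tr_K}(\Phi)/M$. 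The differences are minor: the paper assumes, via an asymptotic-normality lemma, the exact Gaussian model $\boldsymbol{\mathcal{W}}_i\mid\mathcal{W}^o\sim\mathcal{N}(\mathcal{W}^o,\mu\Pi)$ rather than deriving it, so the obstacle you flag is simply assumed there; it bounds the cross term by Jensen over the $M$ coordinates instead of Cauchy--Schwarz; and it treats the perturbation event and the conditioning on $\mathcal{W}^o$ in the $\rho$ term as informally as you do.
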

\begin{proof}
See Appendix (Proof of Theorem~\ref{Theorem: Laplacian estimation error}).
\end{proof}
The error bound in Theorem~\ref{Theorem: Laplacian estimation error} characterizes how the quality of the learned Laplacian depends jointly on the stepsize $\mu$ and the dimension $M$. The bound decomposes naturally into two components. The first term in $\epsilon$, labeled as \emph{covariance concentration}, captures statistical fluctuations due to finite dimensions and decays at a rate $O\!\left(\tfrac{K}{M}\right)$, consistent with standard covariance concentration results. This term vanishes as $M \to \infty$. The second term represents a \emph{bias} induced by the steady-state error of the non-cooperative stochastic gradient recursions. Since the Laplacian matrix is based on noisy, non-cooperative, stochastic gradient recursions, this noise will naturally seep into the estimation of the Laplacian matrix, and propagate into downstream tasks. As is typical in stochastic gradient algorithms, this term behaves as \( O(\mu) \) and can be reduced by decaying the stepsize \( \mu \). This means that to reduce the overall error in Laplacian estimation, one would need to increase the dimension \( M \), while reducing the stepsize \( \mu \), or in other words:
\begin{equation}
    \lim_{\mu \to 0}\;\lim_{M \to \infty}\, \mathbb{E}_{\boldsymbol{{\scriptstyle \mathcal{W}}}_i\mid {\scriptstyle \mathcal{W}}^o}\,\|\widehat{L}-L\|^{2} = 0 \label{eq:zero bias}.
\end{equation}
Moreover, we define a quantity 
\begin{equation}
  \mathrm{TSI}(L) \triangleq \sum_{k=1}^{K}\|L_{\mathcal{N}_k,\mathcal{N}_k}\|^4,  
\end{equation}
which serves as a \emph{topology sensitivity index} that characterizes how the network structure influences the Laplacian estimation error. Each term $\|L_{\mathcal{N}_k,\mathcal{N}_k}\|$ reflects the spectral strength (or stiffness) of the local Laplacian block around node~$k$, which increases with large node degree or strong local connectivity. Consequently, larger values of $\mathrm{TSI}(L)$ indicate that small perturbations in local covariance estimates are more strongly amplified in the reconstruction of the global Laplacian. In particular, heterogeneous networks with hub nodes tend to exhibit larger $\mathrm{TSI}(L)$ and hence higher estimation sensitivity.

\subsection{Laplacian-Induced Error in Multitask Learning}
\noindent With the Laplacian estimation error characterized in Theorem~\ref{Theorem: Laplacian estimation error}, we now examine how inaccuracies in the learned graph affect the performance of the multitask diffusion recursion in~\eqref{eq:DGD_recursion}.

\paragraph{Network mean-square deviation.}
We begin by introducing the mean-square deviation (MSD) to quantify how closely the stochastic recursion tracks the optimal solution associated with the Laplacian $L$.
Define the network error vector at time $i$ as
\begin{equation}
    \widetilde{\boldsymbol{{\scriptstyle \mathcal{W}}}}_{\widehat{L},i}
\triangleq
{\scriptstyle \mathcal{W}}^o_{\widehat{L}}
-
\boldsymbol{{\scriptstyle \mathcal{W}}}_{\widehat{L},i}.
\end{equation}

\begin{lemma}[Network MSD performance]\label{lemma:MSD bound}
Let Assumptions~\ref{assumption:GMRF} through~\ref{assumption:Gradient noise} hold.
For a sufficiently small stepsize $\mu$, the steady-state network MSD satisfies
\begin{align}
\limsup_{i \to \infty}\mathbb{E}\|\widetilde{\boldsymbol{{\scriptstyle \mathcal{W}}}}_{\widehat{L},i}\|^2
&\leq \frac{\mu^{\prime}\operatorname{Tr}\left((I_{MK}-\mu^\prime\eta\mathcal{\widehat{L}})^2\mathcal{R}_s\right)}{\min\{2\nu+\mu^{\prime}\nu^2,\notag\;\;2\delta+\mu^{\prime}\delta^2\}}+\operatorname{O}\left(\mu^{\prime}\right)\notag\\
&=O(\mu').
\end{align}
\end{lemma}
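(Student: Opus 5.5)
We follow the standard energy-recursion argument for diffusion-type stochastic recursions, as in~\cite{Adaptation,stefan_reg1,stefan_reg2}, treating the learned Laplacian $\widehat{\mathcal{L}}$ as a fixed matrix. This is legitimate because Phase~II data are independent of the Phase~I iterates used to build $\widehat{L}$, so we can condition on $\widehat{L}$ throughout.

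\textbf{Step 1: fixed-point form and error recursion.} Write ${\scriptstyle \mathcal{W}}^o_{\widehat{L}}$ as the fixed point of the noiseless recursion, i.e. ${\scriptstyle \mathcal{W}}^o_{\widehat{L}} = (I_{MK}-\mu'\eta\widehat{\mathcal{L}})({\scriptstyle \mathcal{W}}^o_{\widehat{L}}-\mu'\nabla\mathcal{J}({\scriptstyle \mathcal{W}}^o_{\widehat{L}}))$. This fixed point differs from the minimizer of \eqref{eq:multitask_cost} with $\widehat{\mathcal L}$ only by $O(\mu')$; we use it as the reference point. Subtracting \eqref{eq:DGD_recursion} from the fixed-point relation and applying the mean-value theorem, with $\boldsymbol{\mathcal H}_{i-1}=\int_0^1\nabla^2\mathcal J(\cdot)\,dt$, gives
\begin{equation*}
\widetilde{\boldsymbol{{\scriptstyle \mathcal{W}}}}_{\widehat{L},i}=(I_{MK}-\mu'\eta\widehat{\mathcal L})\bigl[(I_{MK}-\mu'\boldsymbol{\mathcal H}_{i-1})\widetilde{\boldsymbol{{\scriptstyle \mathcal{W}}}}_{\widehat{L},i-1}+\mu'\boldsymbol{{\scriptstyle \mathcal S}}_i\bigr].
\end{equation*}
By Assumption~\ref{assumption:Risk function}, $\nu I\le\boldsymbol{\mathcal H}_{i-1}\le\delta I$.

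\textbf{Step 2: mean-square recursion.} We take squared norms and conditional expectations. The cross term vanishes by the zero-mean property in Assumption~\ref{assumption:Gradient noise}. We need $\widehat L\succeq 0$ with $\|I-\mu'\eta\widehat{\mathcal L}\|\le1$. This holds for small $\mu'$ provided the estimate is a valid GGL, and we would argue it holds with high probability via Theorem~\ref{Theorem: Laplacian estimation error}, or else impose it directly. Under this condition the deterministic part contracts by $\|I-\mu'\boldsymbol{\mathcal H}\|^2\le\max\{(1-\mu'\nu)^2,(1-\mu'\delta)^2\}$. The resulting contraction factor is $1-\mu'\min\{2\nu-\mu'\nu^2,2\delta-\mu'\delta^2\}$, which is where the denominator in the statement comes from (up to sign conventions in the second-order term). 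The noise term contributes $\mu'^2\operatorname{Tr}((I-\mu'\eta\widehat{\mathcal L})^2\mathcal R_{s,i}(\boldsymbol{{\scriptstyle \mathcal W}}_{i-1}))$.

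\textbf{Step 3: handling the noise covariance.} We replace $\mathcal R_{s,i}(\boldsymbol{{\scriptstyle \mathcal W}}_{i-1})$ by $\mathcal R_s$. The difference is controlled by $k_s\|\cdot\|^{\gamma_s}$ together with the fourth-moment bound, using the usual argument that fourth moments are $O(\mu'^2)$. Two further corrections are needed. The first is the discrepancy between $\mathcal{W}^o$ (where the noise is centered in Assumption~\ref{assumption:Gradient noise}) and ${\scriptstyle \mathcal W}^o_{\widehat L}$; this is bounded using $\eta\|\widehat L\|$ and the bound of Theorem~\ref{Theorem: Laplacian estimation error}. The second is the convergence of $\mathcal R_{s,i}$ to its limit. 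Both corrections enter only as higher-order terms, which we collect into $O(\mu'^2)$ per step.

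\textbf{Step 4: unroll.} We obtain $\mathbb E\|\widetilde{\boldsymbol{{\scriptstyle \mathcal W}}}_i\|^2\le(1-\mu' c)\mathbb E\|\widetilde{\boldsymbol{{\scriptstyle \mathcal W}}}_{i-1}\|^2+\mu'^2\operatorname{Tr}(\cdot)+O(\mu'^3)$, where $c=\min\{\cdots\}$. Iterating and taking $\limsup$ gives the stated bound, and hence $O(\mu')$. The main obstacle is Step~3: rigorously bounding the state-dependent noise-covariance perturbation and the fourth-moment terms so that they are genuinely higher order. We would first establish $\limsup\mathbb E\|\widetilde{\boldsymbol{{\scriptstyle \mathcal W}}}\|^4=O(\mu'^2)$ via the analogous fourth-order recursion, following~\cite{Adaptation}.
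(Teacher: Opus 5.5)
Your plan is correct and follows the paper's skeleton. Both use a mean-square energy recursion in which the cross term vanishes by the conditional zero-mean property. Contraction comes from $\|I_{MK}-\mu'\eta\widehat{\mathcal{L}}\|\le 1$ together with $\|I-\mu'\mathcal{H}\|\le\max\{|1-\mu'\nu|,|1-\mu'\delta|\}$. You are right to flag that the first bound needs $\widehat{L}\succeq 0$, which the paper uses tacitly. The steady-state bound is then the per-step noise injection divided by one minus the squared contraction factor.

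Two technical choices differ. First, the reference point:
\begin{itemize}
\item The paper centers the error at the regularized minimizer ${\scriptstyle\mathcal{W}}^o_{\widehat{L}}$. This leaves a deterministic forcing term $\mu'^2\eta^2\widehat{\mathcal{L}}^2{\scriptstyle\mathcal{W}}^o$ in the error recursion, which the paper absorbs into an $O(\mu'^2)$ remainder.
\item You center at the fixed point ${\scriptstyle\mathcal{W}}^\star$ of the noiseless map. This removes the forcing term, but you then need a transfer step back to the lemma's error $\widetilde{\boldsymbol{{\scriptstyle\mathcal{W}}}}_{\widehat{L},i}$, which you only assert. It is easy: from $\nabla\mathcal{J}({\scriptstyle\mathcal{W}}^\star)+\eta\widehat{\mathcal{L}}{\scriptstyle\mathcal{W}}^\star=\mu'\eta\widehat{\mathcal{L}}\nabla\mathcal{J}({\scriptstyle\mathcal{W}}^\star)$ and strong convexity you get $\|{\scriptstyle\mathcal{W}}^\star-{\scriptstyle\mathcal{W}}^o_{\widehat{L}}\|=O(\mu')$. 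Then $\|a+b\|^2\le(1+\alpha)\|a\|^2+(1+\alpha^{-1})\|b\|^2$ moves the bound over.
\end{itemize}
Second, the Hessian. The paper replaces $\boldsymbol{\mathcal{H}}_{i-1}$ by its value at the minimizer (the long-term model). It then imports Lemma~1 of \cite{stefan_reg2} to return to the actual recursion up to $O(\mu'^{3/2})$. You use the pathwise bound $\nu I\le\boldsymbol{\mathcal{H}}_{i-1}\le\delta I$ directly. Since only an upper bound is sought, your route is more self-contained. It also avoids citing a result established for combinatorial Laplacians, the setting the paper itself says does not transfer directly. The long-term model is what one needs for an exact MSD expression rather than a bound.

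Your denominator $2\nu-\mu'\nu^2$ is the correct one, since $1-(1-\mu'\nu)^2=2\mu'\nu-\mu'^2\nu^2$. The paper's plus sign is a slip, harmless under the $O(\mu')$ remainder.

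One claim in Step~3 is wrong, though it does not break the lemma as stated. Under your reading of Assumption~\ref{assumption:Gradient noise}, the gap $\|{\scriptstyle\mathcal{W}}^o-{\scriptstyle\mathcal{W}}^o_{\widehat{L}}\|$ is independent of $\mu'$ and generically nonzero; it is bounded by $\eta\|\widehat{L}\|\,\|{\scriptstyle\mathcal{W}}^o\|/\lambda_{\widehat{L}}$. Theorem~\ref{Theorem: Laplacian estimation error} does not make it small, because it controls $\widehat{L}-L$ rather than $\widehat{L}$. So replacing $\mathcal{R}_{s,i}(\boldsymbol{{\scriptstyle\mathcal{W}}}_{i-1})$ by $\mathcal{R}_s$ costs order $\mu'^2$ per step, the same order as the main noise injection, not a higher-order term. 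The $O(\mu'^3)$ in Step~4 is therefore unjustified, and it also contradicts the $O(\mu'^2)$ per step you state in Step~3. After division by $\mu'c$ this is an $O(\mu')$ contribution, which the statement's $+O(\mu')$ absorbs, so your conclusion stands.

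The same observation shows the fourth-moment recursion is unnecessary here. Jensen applied to the fourth-moment condition gives $\mathbb{E}[\|\boldsymbol{{\scriptstyle\mathcal{S}}}_i\|^2\mid\boldsymbol{{\scriptstyle\mathcal{W}}}_{i-1}]\le\beta^2\|{\scriptstyle\mathcal{W}}^o-\boldsymbol{{\scriptstyle\mathcal{W}}}_{i-1}\|^2+\sigma_s^2$, which already yields the $O(\mu')$ bound. It also means that neither your argument nor the paper's pins down $\operatorname{Tr}((I_{MK}-\mu'\eta\widehat{\mathcal{L}})^2\mathcal{R}_s)$ as the sharp leading constant; only the $O(\mu')$ order is actually proved.
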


\begin{proof}
See Appendix (Proof of Lemma~\ref{lemma:MSD bound}).
\end{proof}

\paragraph{Laplacian-induced bias}
When the diffusion recursion operates with an estimated Laplacian $\widehat{L}$, an additional deterministic bias arises due to the mismatch between the true and learned graph structures.
To quantify this effect, define
\[
\Delta{\scriptstyle \mathcal{W}}_{\widehat{L}}^{o}
\triangleq
{\scriptstyle \mathcal{W}}^{o}_{L}
-
{\scriptstyle \mathcal{W}}^{o}_{\widehat{L}},
\]
which represents the deviation between the optimal solutions associated with $L$ and $\widehat{L}$.
The \emph{Laplacian-induced bias} is defined as
\(
\mathbb{E}_{\widehat{L}}
\|
\Delta{\scriptstyle \mathcal{W}}_{\widehat{L}}^{o}
\|^{2},
\)
where the expectation is taken with respect to the randomness in the estimated Laplacian.

\begin{lemma}[Laplacian-induced bias]\label{lemma:Laplacian-induced bias}
Suppose the conditions of Lemma~\ref{lemma:MSD bound} hold.
Then,
\begin{equation}
\bigl\|
\Delta{\scriptstyle \mathcal{W}}_{\widehat{L}}^{o}
\bigr\|^{2}
\le
\frac{\eta^{2}}{\lambda_{\widehat{L}}^{2}}\,
\|\widehat{L}-L\|^{2}\,
\|{\scriptstyle \mathcal{W}}^{o}_{L}\|^{2},
\end{equation}
where
\(
\lambda_{\widehat{L}}
\triangleq
\nu + \eta\,\lambda_{\min}(\widehat{L}).
\)
Consequently, in the large-sample regime, the Laplacian-induced bias scales at rate $O(\mu)$, i.e.,
\begin{equation}
\lim_{M\to\infty}
\mathbb{E}_{\widehat{L}}
\bigl\|
\Delta{\scriptstyle \mathcal{W}}_{\widehat{L}}^{o}
\bigr\|^{2}
=
O(\mu).
\end{equation}
\end{lemma}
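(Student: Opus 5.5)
The plan is to compare the first-order optimality conditions of the two regularized problems and then control the difference using strong monotonicity. By Assumption~\ref{assumption:Risk function}, $\mathcal{J}$ is $\nu$-strongly convex. For any positive semidefinite GGL estimate, the objective $\mathcal{J}({\scriptstyle \mathcal{W}})+\frac{\eta}{2}{\scriptstyle \mathcal{W}}^\top\widehat{\mathcal{L}}{\scriptstyle \mathcal{W}}$ is therefore strongly convex with modulus at least $\nu+\eta\lambda_{\min}(\widehat{L})=\lambda_{\widehat{L}}$. Here we use $\lambda_{\min}(\widehat{L}\otimes I_M)=\lambda_{\min}(\widehat{L})$. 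The two minimizers satisfy
\begin{align}
\nabla\mathcal{J}({\scriptstyle \mathcal{W}}^o_L)+\eta\mathcal{L}{\scriptstyle \mathcal{W}}^o_L&=0,\\
\nabla\mathcal{J}({\scriptstyle \mathcal{W}}^o_{\widehat{L}})+\eta\widehat{\mathcal{L}}{\scriptstyle \mathcal{W}}^o_{\widehat{L}}&=0.
\end{align}

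\textbf{Step 1 (mean-value form).} Subtract the two conditions and add and subtract $\eta\widehat{\mathcal{L}}{\scriptstyle \mathcal{W}}^o_L$. Writing $\nabla\mathcal{J}({\scriptstyle \mathcal{W}}^o_L)-\nabla\mathcal{J}({\scriptstyle \mathcal{W}}^o_{\widehat{L}})=H\,\Delta{\scriptstyle \mathcal{W}}^o_{\widehat{L}}$ with the integrated Hessian $H=\int_0^1\nabla^2\mathcal{J}(\cdot)\,dt\succeq\nu I$, we obtain
\begin{equation}
(H+\eta\widehat{\mathcal{L}})\,\Delta{\scriptstyle \mathcal{W}}^o_{\widehat{L}}=\eta(\widehat{\mathcal{L}}-\mathcal{L}){\scriptstyle \mathcal{W}}^o_L .
\end{equation}
The matrix $H+\eta\widehat{\mathcal{L}}$ is symmetric with smallest eigenvalue at least $\lambda_{\widehat{L}}$, so it is invertible with inverse norm at most $1/\lambda_{\widehat{L}}$. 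Taking norms, and using $\|(\widehat{L}-L)\otimes I_M\|=\|\widehat{L}-L\|$, then squaring, gives the deterministic inequality. One must make sure $\widehat{L}\succeq 0$, or at least that $\lambda_{\widehat{L}}>0$, so that the constant is meaningful. Since the bound is stated in terms of $\lambda_{\widehat{L}}$, I would simply require $\lambda_{\widehat{L}}>0$. This holds automatically when $\widehat{L}\succeq0$, and with high probability for large $M$ by the closeness of $\widehat{L}$ to $L\succ0$.

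\textbf{Step 2 (rate).} Taking expectation over $\widehat{L}$ requires bounding $\mathbb{E}[\|\widehat{L}-L\|^2/\lambda_{\widehat{L}}^2]$. Crudely, $\lambda_{\widehat{L}}\ge\nu$ whenever $\widehat{L}\succeq0$, which yields $\mathbb{E}\|\Delta{\scriptstyle \mathcal{W}}^o_{\widehat{L}}\|^2\le\frac{\eta^2}{\nu^2}\|{\scriptstyle \mathcal{W}}^o_L\|^2\,\mathbb{E}\|\widehat{L}-L\|^2$. Here ${\scriptstyle \mathcal{W}}^o_L$ is deterministic given ${\scriptstyle \mathcal{W}}^o$. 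Then invoke Theorem~\ref{Theorem: Laplacian estimation error}. As $M\to\infty$ the covariance-concentration term in $\epsilon$ vanishes, since its factor is $K/M+K^2/M^2$. What remains is $4\mu\|{\scriptstyle \mathcal{W}}^o\|^2\operatorname{Tr}(\Phi/M)+3\mu^2\|\operatorname{Tr_K}(\Phi/M)\|^2$. Both $\operatorname{Tr}(\Phi/M)$ and $\operatorname{Tr_K}(\Phi/M)$ are per-coordinate averages of the Lyapunov solution $\Pi$ and stay bounded in $M$ under the Kronecker/shared-structure model. The prefactor $\mathrm{TSI}(L)/(1-\zeta)^2$ is independent of $\mu$. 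Hence the limit is $O(\mu)+O(\mu^2)=O(\mu)$.

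\textbf{Main obstacle.} The delicate point is exchanging the expectation with the random factor $1/\lambda_{\widehat{L}}^2$, that is, ensuring $\widehat{L}$ does not lose positive definiteness. The estimator is assembled from inverses of local sample covariances followed by symmetrization, so it is not automatically PSD. My first attempt would be to work on the event $\{\|\widehat{L}-L\|\le\lambda_{\min}(L)/2\}$, on which $\lambda_{\widehat{L}}\ge\nu+\eta\lambda_{\min}(L)/2$ by Weyl's inequality. I would then control the complement via Markov's inequality with the bound of Theorem~\ref{Theorem: Laplacian estimation error}, whose probability vanishes as $M\to\infty$ and $\mu\to0$. Alternatively, I would simply assume the estimator is projected onto the PSD cone, so that $\lambda_{\widehat{L}}\ge\nu$ holds deterministically.
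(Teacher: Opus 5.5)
Your Step~1 is correct and is the paper's argument in linear-system form. The paper applies strong monotonicity of $\nabla\mathcal{J}_{\widehat{L}}$ (from $\nabla^2\mathcal{J}+\eta\widehat{\mathcal{L}}\succeq\lambda_{\widehat{L}}I$) to the two optimality conditions and then uses Cauchy--Schwarz, which amounts to taking your identity $(H+\eta\widehat{\mathcal{L}})\Delta{\scriptstyle \mathcal{W}}^o_{\widehat{L}}=\eta(\widehat{\mathcal{L}}-\mathcal{L}){\scriptstyle \mathcal{W}}^o_L$ and pairing it with $\Delta{\scriptstyle \mathcal{W}}^o_{\widehat{L}}$. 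Your Step~2 spells out the rate claim, which the paper leaves to Theorem~\ref{Theorem: Laplacian estimation error} without argument, and you rightly flag that it tacitly needs $\lambda_{\widehat{L}}>0$ almost surely.

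Only your PSD remedy actually closes that point. A Frobenius projection onto the PSD cone is non-expansive toward $L\succ 0$, so the bound of Theorem~\ref{Theorem: Laplacian estimation error}, which is really a Frobenius bound, survives. The Markov route does not suffice: it controls the probability of the bad event but not $\mathbb{E}\bigl[\|\Delta{\scriptstyle \mathcal{W}}^o_{\widehat{L}}\|^2\mathbf{1}_{\mathrm{bad}}\bigr]$, and on that event $\lambda_{\widehat{L}}$ can be nonpositive, so ${\scriptstyle \mathcal{W}}^o_{\widehat{L}}$ need not even exist.
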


\begin{proof}
See Appendix (Proof of Lemma~\ref{lemma:Laplacian-induced bias}).
\end{proof}

\paragraph{Overall Laplacian-induced deviation}
Combining the stochastic MSD bound in Lemma~\ref{lemma:MSD bound} with the deterministic bias bound in Lemma~\ref{lemma:Laplacian-induced bias}, we can characterize the asymptotic deviation of the diffusion iterate
$\boldsymbol{{\scriptstyle \mathcal{W}}}_{\widehat{L},i}$
from the true optimal solution
${\scriptstyle \mathcal{W}}^{o}_{L}$.

\begin{theorem}[Laplacian-induced deviation]\label{eq:Laplacian-induced deviation}
The steady-state deviation of the diffusion recursion with an estimated Laplacian satisfies
\begin{align}
&\lim_{M\to\infty}
\Bigl(
\limsup_{i\to\infty}
\mathbb{E}\,
\bigl\|
{\scriptstyle \mathcal{W}}^{o}_{L}
-
\boldsymbol{{\scriptstyle \mathcal{W}}}_{\widehat{L},i}
\bigr\|^{2}
\Bigr)\\
&\;\;=
\lim_{M\to\infty}
\Bigl(
\limsup_{i\to\infty}
\mathbb{E}\,
\bigl\|
\Delta{\scriptstyle \mathcal{W}}_{\widehat{L}}^{o}
+
\widetilde{\boldsymbol{{\scriptstyle \mathcal{W}}}}_{\widehat{L},i}
\bigr\|^{2}
\Bigr)
\nonumber\\
&\;\;=\operatorname{O}\left(\mu\|{\scriptstyle \mathcal{W}}^{o}\|^2
\operatorname{Tr}\left(\frac{\Phi}{M}\right)
\right) + \operatorname{O}\left(\mu^{\prime}\frac{\operatorname{Tr}\left((I_{MK}-\mu^\prime\eta\mathcal{\widehat{L}})^2\mathcal{R}_s\right)}{\min\{2\nu+\mu^{\prime}\nu^2,\notag\;\;2\delta+\mu^{\prime}\delta^2\}}\right)\\
&\;\;=
O(\mu) + O(\mu').
\end{align}
\end{theorem}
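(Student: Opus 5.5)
The plan is to decompose the deviation into the two pieces already controlled by Lemma~\ref{lemma:MSD bound} and Lemma~\ref{lemma:Laplacian-induced bias}. For each realization of $\widehat{L}$ we write
\[
{\scriptstyle \mathcal{W}}^{o}_{L}-\boldsymbol{{\scriptstyle \mathcal{W}}}_{\widehat{L},i}
=\Delta{\scriptstyle \mathcal{W}}_{\widehat{L}}^{o}+\widetilde{\boldsymbol{{\scriptstyle \mathcal{W}}}}_{\widehat{L},i},
\]
which gives the first equality directly. We then apply the elementary inequality $\|a+b\|^2\le 2\|a\|^2+2\|b\|^2$. This avoids any need to control the cross term $\mathbb{E}\,(\Delta{\scriptstyle \mathcal{W}}_{\widehat{L}}^{o})^\top\widetilde{\boldsymbol{{\scriptstyle \mathcal{W}}}}_{\widehat{L},i}$, and costs only a constant factor that the $O(\cdot)$ notation absorbs. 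The expectation is understood as taken jointly over the Phase~I randomness, which determines $\widehat{L}$, and the Phase~II gradient noise. We condition on $\widehat{L}$ first and then average over it. Since Phase~II uses fresh data, the gradient noise is independent of $\widehat{L}$ given $\mathcal{F}_{N_1}$, so Lemma~\ref{lemma:MSD bound} applies conditionally with $\widehat{\mathcal{L}}$ held fixed.

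\textbf{The MSD term.} Conditionally on $\widehat{L}$, Lemma~\ref{lemma:MSD bound} gives
\[
\limsup_{i\to\infty}\mathbb{E}\|\widetilde{\boldsymbol{{\scriptstyle \mathcal{W}}}}_{\widehat{L},i}\|^2\le \mu'\operatorname{Tr}\bigl((I-\mu'\eta\widehat{\mathcal{L}})^2\mathcal{R}_s\bigr)/\min\{\cdot\}+O(\mu').
\]
Averaging over $\widehat{L}$ requires the trace to be uniformly bounded, i.e.\ $\|I-\mu'\eta\widehat{\mathcal{L}}\|$ must stay bounded. This follows because Theorem~\ref{Theorem: Laplacian estimation error} bounds $\mathbb{E}\|\widehat{L}-L\|^2$, so $\mathbb{E}\|\widehat{L}\|^2\le 2\|L\|^2+2\mathbb{E}\|\widehat{L}-L\|^2<\infty$. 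The same bound supplies the stability requirement $\mu'$ small relative to $\widehat{L}$, at least in the $M\to\infty$ limit where $\widehat{L}\to L$ in mean square up to $O(\mu)$. This term is therefore $O(\mu')$ and does not grow as $M\to\infty$.

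\textbf{The bias term.} By Lemma~\ref{lemma:Laplacian-induced bias}, $\|\Delta{\scriptstyle \mathcal{W}}_{\widehat{L}}^{o}\|^2\le (\eta^2/\lambda_{\widehat{L}}^2)\|\widehat{L}-L\|^2\|{\scriptstyle \mathcal{W}}^{o}_{L}\|^2$. Since $\lambda_{\widehat{L}}\ge\nu$ whenever $\widehat{L}\succeq0$, we may replace $1/\lambda_{\widehat{L}}^2$ by the deterministic constant $1/\nu^2$. If $\widehat{L}$ is not guaranteed positive semidefinite, we would instead restrict to the high-probability event $\lambda_{\min}(\widehat{L})\ge0$, which holds asymptotically as $M\to\infty$ because $L\succ0$. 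Taking expectations and inserting \eqref{eq: L inequality}, the covariance-concentration part of $\epsilon$ is $O(K/M+K^2/M^2)$ and vanishes as $M\to\infty$. What remains is
\[
\frac{\mathrm{TSI}(L)}{(1-\zeta)^2}\bigl(4\mu\|{\scriptstyle \mathcal{W}}^o\|^2\operatorname{Tr}(\Phi/M)+3\mu^2\|\operatorname{Tr_K}(\Phi/M)\|^2\bigr),
\]
whose leading part is $O\bigl(\mu\|{\scriptstyle \mathcal{W}}^o\|^2\operatorname{Tr}(\Phi/M)\bigr)$, with the $\mu^2$ term absorbed for small $\mu$. Summing the two contributions and passing to the $M\to\infty$ limit yields the stated display, and hence $O(\mu)+O(\mu')$.

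\textbf{Main obstacle.} The delicate step is justifying the interchange of $\lim_{M\to\infty}$, $\limsup_{i\to\infty}$ and the expectation over $\widehat{L}$. This needs uniform integrability of the MSD bound in $\widehat{L}$, together with control of the random factor $1/\lambda_{\widehat{L}}^2$ jointly with $\|\widehat{L}-L\|^2$. The first attempt would use the positive-semidefinite bound $\lambda_{\widehat{L}}\ge\nu$ to decouple these factors, so that only the second moment from Theorem~\ref{Theorem: Laplacian estimation error} is needed. A secondary issue is that $\|{\scriptstyle \mathcal{W}}^o\|^2\operatorname{Tr}(\Phi/M)$ must stay bounded as $M$ grows. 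This holds because both $\|{\scriptstyle \mathcal{W}}^o\|^2$ and $\operatorname{Tr}(\Phi)$ scale linearly in $M$ under the Kronecker model, so the product behaves like $O(M)$ times a constant, and we interpret the $O(\mu)$ statement per coordinate or with that normalization, as the paper does.
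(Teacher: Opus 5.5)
Your proposal is essentially the paper's argument. It splits the error into $\Delta{\scriptstyle \mathcal{W}}^{o}_{\widehat{L}}+\widetilde{\boldsymbol{{\scriptstyle \mathcal{W}}}}_{\widehat{L},i}$, uses Lemma~\ref{lemma:MSD bound} for the stochastic part, and feeds Theorem~\ref{Theorem: Laplacian estimation error} into Lemma~\ref{lemma:Laplacian-induced bias} for the bias part (dropping the concentration term as $M\to\infty$), combining the two via $\|a+b\|^2\le 2\|a\|^2+2\|b\|^2$.

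The caveats you raise are not addressed in the paper either: possibly indefinite $\widehat{L}$, interchanging $\lim_{M}$, $\limsup_{i}$ and $\mathbb{E}_{\widehat{L}}$, and the typically linear growth in $M$ of $\|{\scriptstyle \mathcal{W}}^{o}\|^{2}\operatorname{Tr}(\Phi/M)$ and of $\operatorname{Tr}(\mathcal{R}_s)$. The paper simply keeps these $M$-dependent factors inside the $O(\cdot)$, so treat that as an inherited convention. Do not assert, as you do, that the product stays bounded while also being $O(M)$, or that the MSD term does not grow with $M$.
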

Theorem~\ref{eq:Laplacian-induced deviation} shows that the steady-state error of multitask diffusion with a learned graph decomposes into two distinct components: a stochastic error term of order $O(\mu')$ due to gradient noise, and a deterministic Laplacian-induced bias of order $O(\mu)$ stemming from graph estimation errors.
As the dimension $M$ grows and the stepsize $\mu,\,\mu^{\prime}$ decreases, both effects vanish, ensuring that the diffusion recursion asymptotically recovers the optimal solution associated with the true Laplacian, which in turn carries an interpretation of a maximum a posteriori estimate~\cite{stefan_reg2}.

\section{NUMERICAL EXPERIMENTS}
\subsection{General Settings}
\noindent On a connected, undirected network, each node is subjected to streaming data $\{\boldsymbol{d}_k(i), \boldsymbol{u}_{k,i}\}$, satisfying a linear regression model:
\begin{equation}
    \boldsymbol{d}_k(i) = \boldsymbol{u}^\top_{k,i}w^o_k + \boldsymbol{v}_k(i),\; k=1,...,K.
\end{equation}
The processes $\{\boldsymbol{u}_{k,i}, \boldsymbol{v}_k(i)\}$ are zero-mean jointly wide-sense stationary with: i) $\mathbb{E}\boldsymbol{u}_{k,i}\boldsymbol{u}^\top_{\ell,i}=\sigma^2_{u,k}I_M$ if $k=\ell$ and zero otherwise; ii) $\mathbb{E}\boldsymbol{v}_{k}(i)\boldsymbol{v}_{\ell}(i)=\sigma^2_{v,k}$ if $k=\ell$ and zero otherwise; iii) $\boldsymbol{u}_{k,i}$ and $\boldsymbol{v}_k(i)$ are independent. According to \eqref{eq: cost function}, the cost functions take the mean-square-error form:

\begin{equation}
    J_k(w_k) = \frac{1}{2}\mathbb{E}|{\boldsymbol{d}}_k(i)-\boldsymbol{u}^\top_{k,i}w_k|^2.
\end{equation}

\subsection{Laplacian Learning Errors}
\begin{figure}[t]
    \centering
    \includegraphics[width=0.7\linewidth]{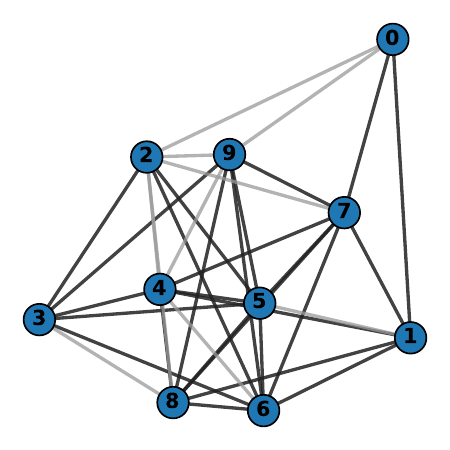}
    \caption{Graph topology with assigned edge weights. Grey edges correspond to weight $1$, while black edges correspond to weight $5$.}
    \label{fig:graph}
\end{figure}

\begin{figure}[t]
    \centering
    \includegraphics[width=0.8\linewidth]{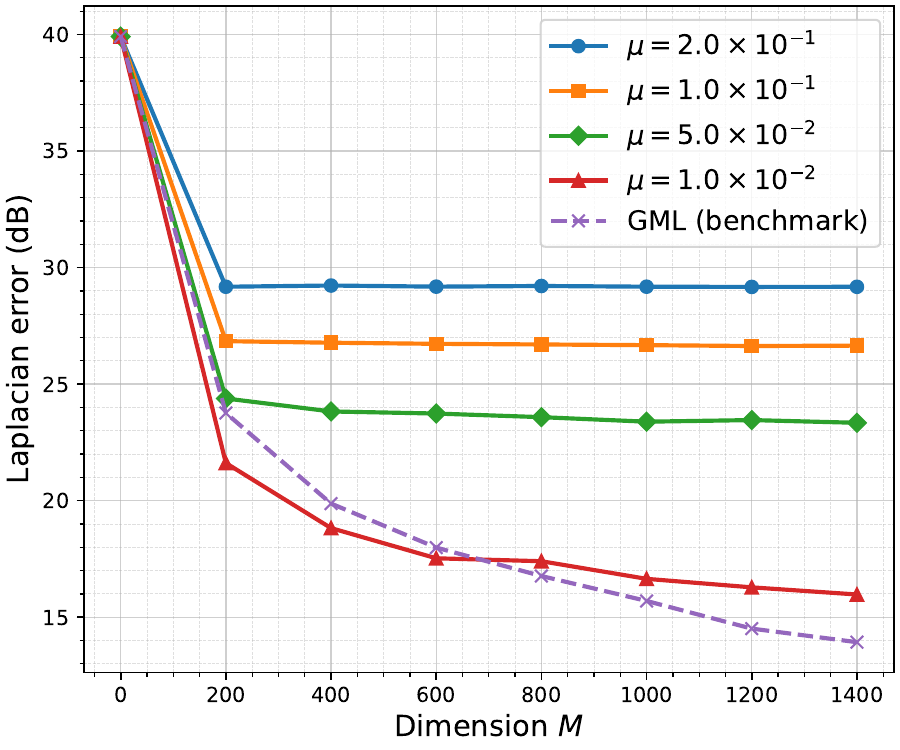}
    \caption{Avg. Laplacian estimation error $\|\widehat L-L\|^2$ versus $M$.}
    \label{fig:L error}
\end{figure}

\noindent \paragraph{Impacts of stepsize $\mu$ and dimension $M$} The Laplacian estimation is conducted on a network with $K=10$ nodes and maximum node degree $8$, illustrated in Fig.~\ref{fig:graph}. To obtain an unbalanced weighted topology that reflects heterogeneous task relationships and poses a more challenging scenario for multitask learning, each edge weight is randomly drawn from the set $\{1,\,5\}$. The self-loop matrix is set as the identity matrix $V=I_K$.

We run the recursion in~\eqref{eq:GD recursion} with different stepsizes
$\mu \in \{2\times10^{-1},\,1\times10^{-1},\,5\times10^{-2},\,10^{-2}\}$
for $10000$ iterations. The resulting steady-state iterates $\scriptstyle \mathcal{W}_i$ are then used in~\eqref{eq:relaxed_local_MLE}--\eqref{eq:L estimate constraints} to construct an estimate $\widehat{L}$ of the true Laplacian $L$. The corresponding Laplacian estimation errors $\|\widehat{L}-L\|^2$, averaged over $100$ Monte-Carlo realizations of $\boldsymbol{\scriptstyle \mathcal{W}}_i$, are reported in Fig.~\ref{fig:L error}. The resulting MML-based Laplacian estimates are further compared against a benchmark GML estimator obtained using the true parameter vector $\boldsymbol{\scriptstyle \mathcal{W}}^o$.

We observe that when $M$ is sufficiently large, the estimation error converges to the bias term characterized in~\eqref{eq:covariance error}, scaling consistently with the theoretical rate $O(\mu)$. This behavior is evident in Fig.~\ref{fig:L error}; for example, at $M=800$, halving the stepsize $\mu$ reduces the error magnitude by approximately a factor of $1/2$, corresponding to a decrease of about $3$\,dB in the error curve. 

Conversely, when $\mu$ is very small and $M$ is not sufficiently large, the error is dominated by the covariance concentration term, which scales as $O\!\left(\tfrac{K}{M}\right)$. This explains why the curve corresponding to $\mu=10^{-2}$ nearly overlaps with the benchmark before $M=800$, thereby confirming the theoretical prediction.

\paragraph{Impacts of topologies} We next investigate how the network topology influences the Laplacian estimation performance. To isolate the effect of topology, we construct three graphs with identical number of nodes and identical average degree: i) a regular graph with uniform degree distribution, ii) an Erd\H{o}s--R\'enyi (ER) random graph with moderate degree variability, and iii) a hub-dominated graph exhibiting strong degree heterogeneity. All edges are assigned unit weights, and the same self-loop matrix $V=I_K$ is used across all topologies to ensure a fair comparison. The resulting graph structures and the calculated $TSI(L)$ are illustrated in Fig.~\ref{fig:Topologies}.
\begin{figure*}[!t]
    \centering
    \includegraphics[width=0.9\linewidth]{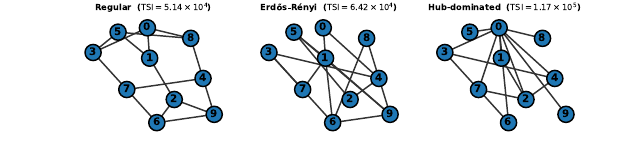}
    \caption{Illustration of three graph topologies with identical average degree: regular, Erd\H{o}s--R\'enyi, and hub-dominated.}
    \label{fig:Topologies}
\end{figure*}

\begin{figure}[t]
    \centering
    \includegraphics[width=0.8\linewidth]{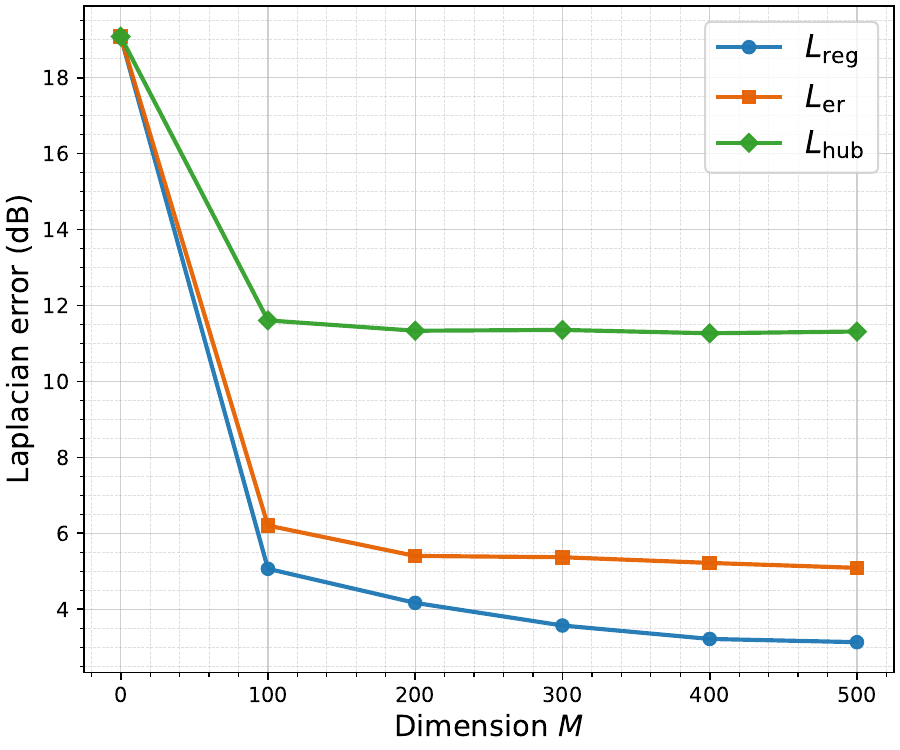}
    \caption{Avg. Laplacian estimation error $\|\widehat L-L\|^2$ versus sample size $M$ under different graph topologies.}
    \label{fig:L_error_topology}
\end{figure}
The recursion in~\eqref{eq:GD recursion} is run with $\mu=5\times10^{-2}$, and the results are applied to estimate graph Laplacians. Fig.~\ref{fig:L_error_topology} reports the average Laplacian estimation error for the three topologies. Consistent with the theoretical analysis, the hub-dominated graph yields the largest estimation error across all values of $M$. This behavior is attributed to the presence of a high-degree hub node, which produces a large local Laplacian block and substantially increases $\mathrm{TSI}(L)$. As a result, perturbations in the local covariance estimates are more strongly amplified during the reconstruction of the global Laplacian.

Overall, these results confirm that the degree heterogeneity plays a critical role in decentralized Laplacian estimation. Networks with highly uneven connectivity, such as hub-dominated graphs, significantly amplify estimation error even when the average connectivity level is kept constant.

\subsection{Steady-State Performance under Diffusion and Learned Laplacians}
\begin{figure}[t]
    \centering
    \includegraphics[width=0.8\linewidth]{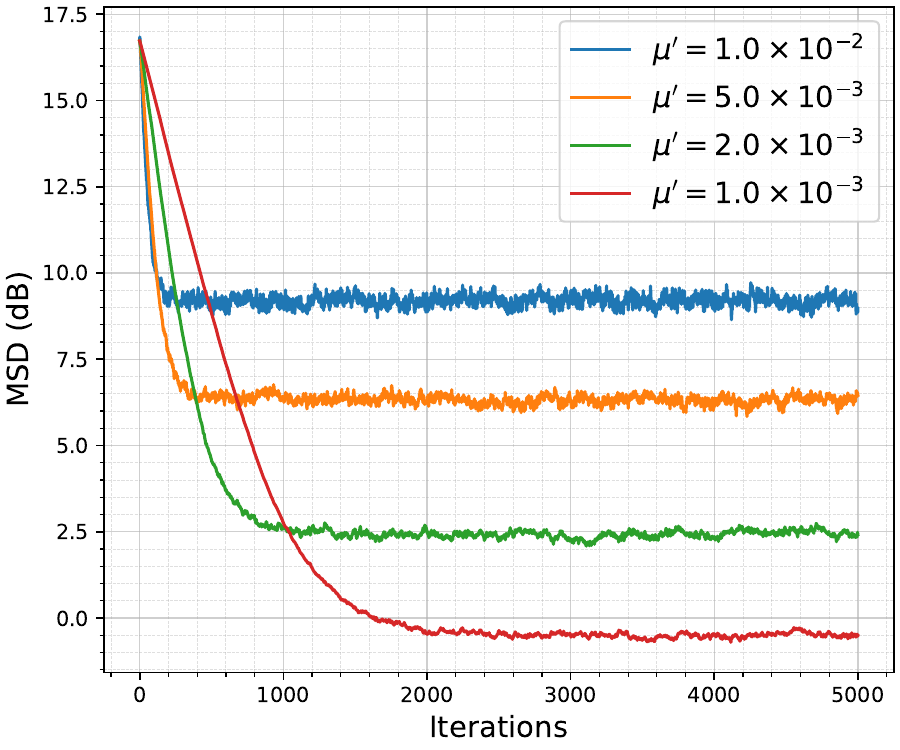}
    \caption{Avg. MSD $\|
\widetilde{{\scriptstyle \mathcal{W}}}_{L,i}
\|^{2}$ with true graph Laplacian.}
    \label{fig:MSD}
\end{figure}

\noindent\paragraph{Impacts of multitask learning stepsizes} We consider the same graph topology shown in Fig.~\ref{fig:graph}. 
The diffusion multitask algorithm in~\eqref{eq:DGD_recursion} is employed to evaluate the steady-state MSD when the \emph{true} Laplacian is available. 
The regularization weight is set to $\eta=1$. 
Under this setting, any performance degradation due to Laplacian estimation is eliminated, thereby isolating the stochastic adaptation behavior of the diffusion recursion. 
The algorithm~\eqref{eq:DGD_recursion} is executed with several diffusion stepsizes $\mu^\prime$, while all other parameters are kept unchanged.

As illustrated in Fig.~\ref{fig:MSD}, the steady-state MSD decreases monotonically as the stepsize $\mu^\prime$ becomes smaller. 
This behavior agrees with the theoretical prediction that the steady-state diffusion error scales on the order of $O\!\left(\mu^\prime\right)$. 

\begin{figure}[t]
    \centering
    \includegraphics[width=0.8\linewidth]{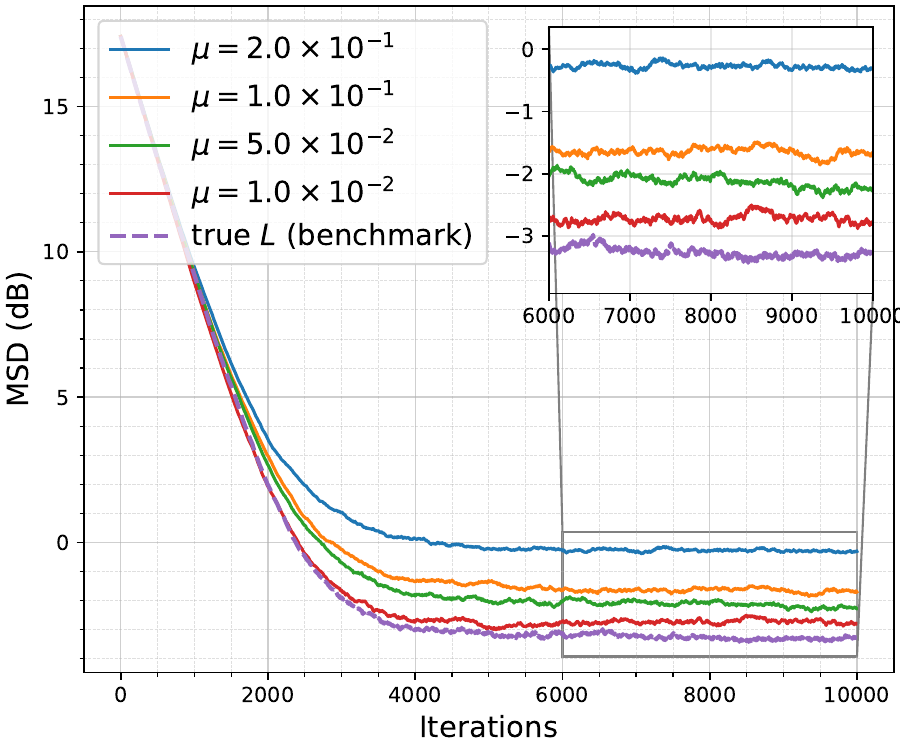}
    \caption{Avg. MSD $\|
\widetilde{{\scriptstyle \mathcal{W}}}_{L,i}
\|^{2}$ with estimated graph Laplacians.}
    \label{fig:MSD_L}
\end{figure}

\paragraph{Impacts of Laplacian learning accuracy}Next, we evaluate the impact of Laplacian learning accuracy on the steady-state performance. 
The same experimental setting is adopted, except that the multitask algorithm now operates with \emph{learned} Laplacians obtained using different non-cooperative learning stepsizes $\mu$ and , and the multitask learning stepsize is fixed at $\mu^\prime=5\times10^{-4}$. 
Since larger $\mu$ leads to higher Laplacian estimation error, this experiment illustrates how the Laplacian-induced bias affects the final multitask solution.

The resulting MSD trajectories are shown in Fig.~\ref{fig:MSD_L}. 
All curves exhibit a similar transient behavior, but converge to different steady-state error floors. 
In particular, smaller Laplacian learning stepsizes $\mu$ yield more accurate Laplacian estimates and therefore lower steady-state MSD. 
When the learning step-size is sufficiently small, the performance approaches that of the true Laplacian benchmark, confirming that the residual performance gap is primarily caused by Laplacian estimation error. 

\subsection{Joint Learning Performance}
\paragraph{Two-phase learning}
To evaluate the effectiveness of the proposed two-phase framework, we amplify each edge weight in Fig.~\ref{fig:graph} by a factor of $10$. This strengthens the Laplacian regularization and enforces stronger smoothness across neighboring tasks. 

In Phase~I, agents operate non-cooperatively with stepsize $\mu\in\{5\times10^{-3},\,2\times10^{-3}\}$. After convergence, the final iterates $\boldsymbol{w}_{k,i}$ are used to estimate the Laplacian matrix. In Phase~II, the learned Laplacian $\widehat{L}$ is incorporated into a cooperative multitask update.

The performance of the two-phase strategy is evaluated in Fig.~\ref{fig:illustration} (Right) using $\overline{\mathrm{MSD}}$,
\begin{equation}
    \overline{\mathrm{MSD}} \triangleq
\begin{cases}
\mathbb{E}\|{\scriptstyle \mathcal{W}}_{i}-{\scriptstyle \mathcal{W}}^o\|^2, & \text{Phase I}, \\
\mathbb{E}\|{\scriptstyle \mathcal{W}}_{\widehat{L},i}-{\scriptstyle \mathcal{W}}^o\|^2, & \text{Phase II},
\end{cases}
\end{equation}
which measures the deviation from the true model parameter ${\scriptstyle \mathcal{W}}^o$.

In Phase~II, a cooperative stepsize $\mu'=\mu$ is applied. The figure shows a clear transition at iteration $5000$, marking the switch from non-cooperative to cooperative learning. After incorporating the learned graph, both joint learning curves converge to a lower steady-state $\overline{\mathrm{MSD}}$ compared to their non-cooperative phase. This demonstrates that cooperation enabled by the learned task graph improves estimation accuracy. The performance gain is particularly pronounced for larger $\mu$, where the stochastic gradient noise in Phase~I is more significant.
\paragraph{Multi-phase learning}
\begin{figure}[t]
    \centering
    \includegraphics[width=0.8\linewidth]{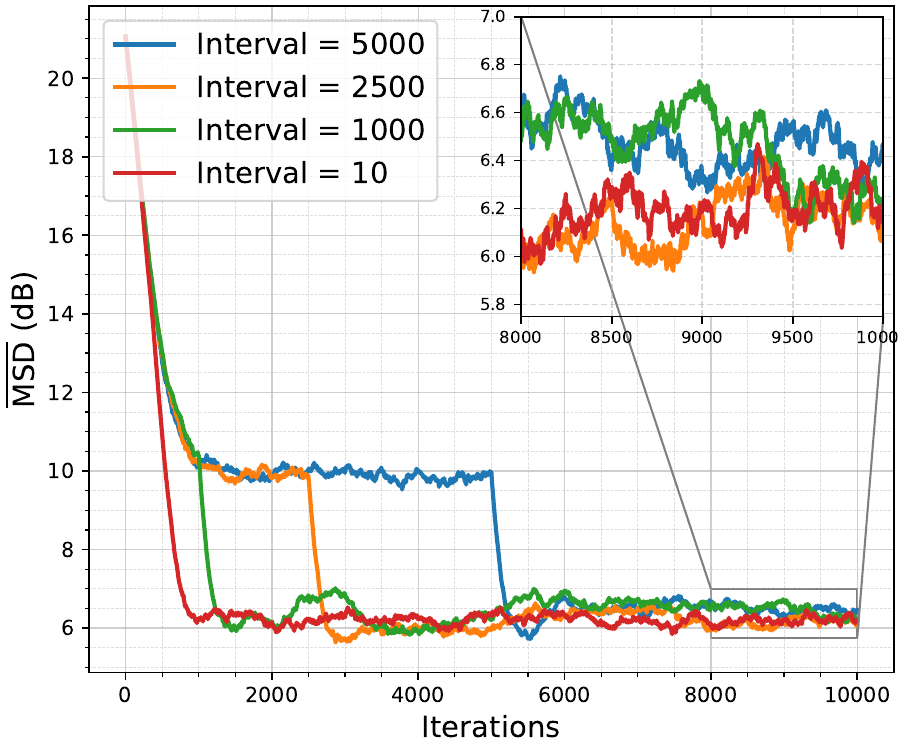}
    \caption{$\overline{\mathrm{MSD}}$ for the multi-phase strategy under different graph Laplacian updating intervals.}
    \label{fig:Multi_phase}
\end{figure}
In practical settings, estimating the graph Laplacian only once after full non-cooperative convergence may be undesirable due to latency or computational cost. We therefore consider a \emph{multi-phase} strategy that periodically updates the Laplacian during learning. After an initial non-cooperative phase, the algorithm alternates between cooperative multitask updates and Laplacian estimation steps every fixed interval, using the most recent parameter iterates.

Since the Laplacian is now learned from iterates that may not be at steady state, the two-phase analysis does not directly apply. Nevertheless, Fig.~\ref{fig:Multi_phase} shows that comparable performance can still be achieved. The setup is identical to the previous experiment: the non-cooperative stepsize is $\mu=5\times10^{-3}$ and the cooperative stepsize is fixed at $\mu^{\prime}=2.7\times10^{-3}$. We test update intervals $\{5000,\,2500,\,1000,\,10\}$. For larger intervals (5000 and 2500), the Laplacian is updated only after the non-cooperative recursion has essentially converged, and the resulting performance closely matches the two-phase benchmark. For shorter intervals (1000 and 10), the Laplacian is updated earlier using noisier iterates, which slightly slows the transient, while leading to nearly the same steady-state $\overline{\mathrm{MSD}}$.

We note, however, that stability depends on the regularization strength $\eta$ and the degree of task smoothness. When the update interval is very short, the Laplacian estimate may be significantly corrupted by its transient error. If, in addition, $\eta$ is chosen large while the true task parameters are weakly correlated (i.e., the smoothness prior is mismatched), the cooperative term can amplify estimation errors rather than suppress them. In this regime, the diffusion recursion may become unstable and the $\overline{\mathrm{MSD}}$ curve can diverge.

\section{Conclusion}

\noindent We proposed a decentralized framework for joint graph learning and multitask adaptation when task relationships are unknown. Under a Gaussian Markov random field model, Laplacian estimation was formulated as a decentralized marginal likelihood problem and analyzed in the presence of stochastic learning noise.

We established explicit finite-sample bounds for the Laplacian estimation error, showing that it decomposes into a covariance concentration term and a steady-state bias that scales with the non-cooperative stepsize. A topology sensitivity quantity was introduced to quantify how network heterogeneity amplifies the estimation error. We further showed that, when the learned Laplacian is used for multitask diffusion, the steady-state deviation consists of an $O(\mu')$ stochastic term and an $O(\mu)$ Laplacian-induced bias compared to the MAP optimal formulation under the known Laplacian.
Simulations validated the theoretical findings and demonstrated that the learned graph enables improved performance over non-cooperative learning while approaching the true-graph benchmark as the stepsizes decrease. The results provide statistical guarantees and design insights for adaptive cooperation in decentralized multitask networks.

\appendix
\section*{Proof of Theorem~\ref{Theorem: Laplacian estimation error}}
\label{proof:L_error}
\addcontentsline{toc}{section}{Proof of Theorem~\ref{Theorem: Laplacian estimation error}}
\noindent Under the Assumption~\ref{assumption:Risk function} and~\ref{assumption:Gradient noise}, we can call on the following Lemma from ~\cite{probability1,probability2,Clustering}.
\begin{lemma}[Asymptotic Normality]\label{Theorem: Asymptotic normality}
Under the Assumption~\ref{assumption:Risk function} and~\ref{assumption:Gradient noise}. Consider the recursion~\eqref{eq:GD recursion} with constant stepsize $\mu$. For sufficiently small $\mu$ and as $i\to\infty$, the steady-state estimation error
admits an asymptotically Gaussian approximation in the following sense~\cite{probability1,probability2}:
\begin{equation}
    \frac{\boldsymbol{{\scriptstyle \mathcal{W}}}_i-{\scriptstyle \mathcal{W}}^o}{\sqrt{\mu}}
    \;\Rightarrow\;
    \mathcal{N}(0,\Pi),
\end{equation}
where ``$\Rightarrow$'' denotes convergence in distribution. The matrix $\Pi$ is symmetric positive semidefinite and is independent of $\mu$.
It is characterized as the unique solution to a continuous Lyapunov equation ~\cite{Clustering}.

In the sequel, we therefore approximate the steady-state estimator by the exact Gaussian model
\begin{equation}
    \boldsymbol{{\scriptstyle \mathcal{W}}}_i \mid {\scriptstyle \mathcal{W}}^o
    \;\sim\;
    \mathcal{N}({\scriptstyle \mathcal{W}}^o,\;\mu\,\Pi).
\end{equation}
\end{lemma}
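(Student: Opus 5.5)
The plan follows the classical route for constant-stepsize stochastic approximation, as in the long-term model arguments of~\cite{Adaptation,Clustering}. The proof has three steps: reduce the nonlinear error recursion to a linear \emph{long-term model}, apply a martingale central limit theorem to that linear model, and identify the limiting covariance as the solution of a Lyapunov equation.

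\textbf{Step 1: linearize and reduce to the long-term model.} Let $\widetilde{\boldsymbol{{\scriptstyle \mathcal{W}}}}_i={\scriptstyle \mathcal{W}}^o-\boldsymbol{{\scriptstyle \mathcal{W}}}_i$. Since $\nabla\mathcal{J}({\scriptstyle \mathcal{W}}^o)=0$, the mean-value theorem applied to~\eqref{eq:GD recursion} gives
\[
\widetilde{\boldsymbol{{\scriptstyle \mathcal{W}}}}_i=(I_{MK}-\mu\boldsymbol{H}_{i-1})\widetilde{\boldsymbol{{\scriptstyle \mathcal{W}}}}_{i-1}+\mu\boldsymbol{{\scriptstyle \mathcal{S}}}_i(\boldsymbol{{\scriptstyle \mathcal{W}}}_{i-1}).
\]
Here $\boldsymbol{H}_{i-1}=\int_0^1\nabla^2\mathcal{J}({\scriptstyle \mathcal{W}}^o-t\widetilde{\boldsymbol{{\scriptstyle \mathcal{W}}}}_{i-1})\,dt$ is block diagonal, because the agents do not cooperate, and satisfies $\nu I\le\boldsymbol{H}_{i-1}\le\delta I$ by Assumption~\ref{assumption:Risk function}.

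Standard energy arguments with the fourth-moment bound of Assumption~\ref{assumption:Gradient noise} give $\limsup_i\mathbb{E}\|\widetilde{\boldsymbol{{\scriptstyle \mathcal{W}}}}_i\|^2=O(\mu)$ and $\limsup_i\mathbb{E}\|\widetilde{\boldsymbol{{\scriptstyle \mathcal{W}}}}_i\|^4=O(\mu^2)$.

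Next, set $H\triangleq\nabla^2\mathcal{J}({\scriptstyle \mathcal{W}}^o)$ and define the long-term model
\[
\boldsymbol{z}_i=(I-\mu H)\boldsymbol{z}_{i-1}+\mu\boldsymbol{{\scriptstyle \mathcal{S}}}_i(\boldsymbol{{\scriptstyle \mathcal{W}}}_{i-1}).
\]
The Hessian-Lipschitz condition gives $\|\boldsymbol{H}_{i-1}-H\|\le \tfrac{k_H}{2}\|\widetilde{\boldsymbol{{\scriptstyle \mathcal{W}}}}_{i-1}\|$. Combined with the fourth-moment bound, this yields $\limsup_i\mathbb{E}\|\widetilde{\boldsymbol{{\scriptstyle \mathcal{W}}}}_i-\boldsymbol{z}_i\|^2=O(\mu^2)$. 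Hence $(\widetilde{\boldsymbol{{\scriptstyle \mathcal{W}}}}_i-\boldsymbol{z}_i)/\sqrt{\mu}\to0$ in mean square, and by Slutsky's lemma it suffices to establish the distributional limit for $\boldsymbol{z}_i/\sqrt{\mu}$.

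\textbf{Step 2: martingale CLT for the linear model.} Unroll the long-term model:
\[
\boldsymbol{z}_i/\sqrt{\mu}=(I-\mu H)^i\boldsymbol{z}_0/\sqrt{\mu}+\sqrt{\mu}\sum_{j=1}^{i}(I-\mu H)^{i-j}\boldsymbol{{\scriptstyle \mathcal{S}}}_j.
\]
The first term vanishes as $i\to\infty$ because $\|I-\mu H\|\le1-\mu\nu<1$. The second term is a sum of martingale differences with respect to $\boldsymbol{\mathcal{F}}_{j-1}$, so I will apply a martingale CLT for triangular arrays indexed by $\mu$, checking two conditions.

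\emph{Conditional variance.} I need $\mu\sum_{n\ge0}(I-\mu H)^n\mathcal{R}_{s,j}(\boldsymbol{{\scriptstyle \mathcal{W}}}_{j-1})(I-\mu H)^n$ to converge in probability. By the Hölder-type condition in Assumption~\ref{assumption:Gradient noise}, $\mathcal{R}_{s,j}(\boldsymbol{{\scriptstyle \mathcal{W}}}_{j-1})$ may be replaced by $\mathcal{R}_s$ at the cost of an error of order $\mathbb{E}\|\widetilde{\boldsymbol{{\scriptstyle \mathcal{W}}}}_{j-1}\|^{\gamma_s}=O(\mu^{\gamma_s/2})$.

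\emph{Lindeberg condition.} This follows from the uniform fourth-moment bound, since $\mu^2\sum_n\|I-\mu H\|^{4n}=O(\mu)\to0$.

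\textbf{Step 3: identify $\Pi$.} The remaining deterministic covariance is $\Pi_\mu=\mu\sum_{n\ge0}(I-\mu H)^n\mathcal{R}_s(I-\mu H)^n$. It is the solution of the discrete Lyapunov equation
\[
H\Pi_\mu+\Pi_\mu H-\mu H\Pi_\mu H=\mathcal{R}_s .
\]
As $\mu\to0$, $\Pi_\mu\to\Pi$, the unique solution of the continuous Lyapunov equation $H\Pi+\Pi H=\mathcal{R}_s$. Uniqueness follows because $H>0$. Positive semidefiniteness and independence of $\mu$ are then immediate. Moreover, since $H$ is block diagonal, the equation decouples across agents only to the extent that $\mathcal{R}_s$ does.

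\textbf{Main obstacle.} The hardest step will be making the order of limits rigorous: first $i\to\infty$, then $\mu\to0$, with the CLT holding uniformly along the triangular array. In particular, the conditional-variance convergence needs the steady-state moment bounds to hold uniformly in $i$, so that the Hölder error term is controlled simultaneously for all summands. My first approach would be to state the result as convergence of the steady-state characteristic functions, following~\cite{probability1,probability2}, rather than as a pathwise statement.

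The final sentence of the lemma, replacing the limit by the exact model $\mathcal{N}({\scriptstyle \mathcal{W}}^o,\mu\Pi)$, is a modelling approximation adopted for the subsequent analysis. It requires no proof beyond Steps 1--3.
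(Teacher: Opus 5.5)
The paper gives no derivation of this lemma. It imports the statement from \cite{probability1,probability2,Clustering} and then adopts the exact model $\mathcal{N}({\scriptstyle \mathcal{W}}^o,\mu\Pi)$ as a working approximation. Your sketch reconstructs the standard argument behind those references (long-term model, martingale CLT, Lyapunov equation), and it is correct.

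The estimates you quote check out. For instance, $\boldsymbol{d}_i\triangleq\widetilde{\boldsymbol{{\scriptstyle \mathcal{W}}}}_i-\boldsymbol{z}_i$ obeys $\boldsymbol{d}_i=(I-\mu H)\boldsymbol{d}_{i-1}-\mu(\boldsymbol{H}_{i-1}-H)\widetilde{\boldsymbol{{\scriptstyle \mathcal{W}}}}_{i-1}$. Hence $\mathbb{E}\|\boldsymbol{d}_i\|^2\le(1-\mu\nu)\mathbb{E}\|\boldsymbol{d}_{i-1}\|^2+\frac{\mu k_H^2}{4\nu}\mathbb{E}\|\widetilde{\boldsymbol{{\scriptstyle \mathcal{W}}}}_{i-1}\|^4$, and the $O(\mu^2)$ fourth moment gives your $O(\mu^2)$ discrepancy.

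What your self-contained route buys over the bare citation is clarity about the sense in which the lemma holds. Three error sources vanish only as $\mu\to0$, and only after $i\to\infty$:
\begin{itemize}
\item the Lindeberg term is $O(\mu)$;
\item replacing $\mathcal{R}_{s,j}(\boldsymbol{{\scriptstyle \mathcal{W}}}_{j-1})$ by $\mathcal{R}_s$ via the H\"older condition costs $O(\mu^{\gamma_s/2})$;
\item the normalized long-term-model error is $O(\mu)$.
\end{itemize}
So Gaussianity emerges only in the joint limit $\lim_{\mu\to0}\limsup_{i\to\infty}$. This is the obstacle you flag, and the characteristic-function formulation handles it.

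The same corrections, together with $\Pi_\mu-\Pi=O(\mu)$, are exactly what the paper absorbs when it treats $\mathcal{N}({\scriptstyle \mathcal{W}}^o,\mu\Pi)$ as exact. That is why, as you say, the final sentence is a modelling step rather than a consequence.

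Your derivation also makes explicit that $\Pi$ is built from $H=\nabla^2\mathcal{J}({\scriptstyle \mathcal{W}}^o)$ and $\mathcal{R}_s$, so it may depend on ${\scriptstyle \mathcal{W}}^o$. This is harmless here because the model is stated conditionally on ${\scriptstyle \mathcal{W}}^o$. The citation buys brevity but leaves all of this implicit.

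One cosmetic fix: the conditional-variance sum should read $\mu\sum_{j=1}^{i}(I-\mu H)^{i-j}\mathcal{R}_{s,j}(\boldsymbol{{\scriptstyle \mathcal{W}}}_{j-1})(I-\mu H)^{i-j}$. As written, $j$ is free inside a sum over $n$.
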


For notational convenience, we introduce an element-wise stacking of the parameter vector by defining
\begin{equation}
\begin{aligned}
       {\scriptstyle \mathcal{W}}^{o,ele}\triangleq\operatorname{vec}\bigl[W^\top\bigr]&=P\operatorname{vec}\bigl[W\bigr]=P{\scriptstyle \mathcal{W}^o}.
\end{aligned}
\end{equation}
Applying the same transformation to the iterates yields
\(
    \boldsymbol{{\scriptstyle \mathcal{W}}}_i^{ele}=P\boldsymbol{{\scriptstyle \mathcal{W}}}_i.
\) By Theorem~\ref{Theorem: Asymptotic normality}, the element-wise stacked estimates admit the Gaussian approximation
\begin{align}
\boldsymbol{{\scriptstyle \mathcal{W}}}_i^{ele} \mid {\scriptstyle \mathcal{W}}^{o,ele}
    \;&\sim\; \mathcal{N}({\scriptstyle \mathcal{W}}^{o,ele},\;\mu\,\Phi)\\
\boldsymbol{w}_{m,i}^{ele} \mid {\scriptstyle \mathcal{W}}^{o,ele}
 \;&\sim\; \mathcal{N}(w_m^{o,ele},\;\mu\,\Phi_m), 
\end{align}
where $\Phi\triangleq P \Pi P^\top$ and $\Phi_m$ denotes the corresponding marginal covariance associated with the $m$-th element-wise component.

\paragraph{The covariance estimation error at node $k$} We begin by bounding the local marginal covariance.
\begin{align}
    &\left\|\widehat{\Sigma}_{\mathcal{N}_k,\mathcal{N}_k}-\Sigma_{\mathcal{N}_k,\mathcal{N}_k}\right\|^2 \leq \left\|\frac{1}{M}\sum_{m=1}^{M}\boldsymbol{w}_m^{ele}(\boldsymbol{w}_m^{ele})^\top-\Sigma\right\|^2\notag\\
    & = \Biggl\|\frac{1}{M}\sum_{m=1}^{M}\bigg(\boldsymbol{w}_m^{ele}(\boldsymbol{w}_m^{ele})^\top - \mathbb{E}[\boldsymbol{w}_m^{ele}(\boldsymbol{w}_m^{ele})^\top|w_{m}^{o,ele}]\bigg) \notag\\&\;\;\;\;+ \frac{1}{M}\sum_{m=1}^{M}(w_{m}^{o,ele}(w_{m}^{o,ele})^\top-\Sigma) + \frac{1}{M}\sum_{m=1}^{M}\mu\Phi_m\Biggr\|^2\notag\\
    &\leq 3\left\|\frac{1}{M}\sum_{m=1}^{M}\bigg(\boldsymbol{w}_m^{ele}(\boldsymbol{w}_m^{ele})^\top - \mathbb{E}[\boldsymbol{w}_m^{ele}(\boldsymbol{w}_m^{ele})^\top|w_{m}^{o,ele}]\bigg)\right\|^2 \notag\\&\;\;\;\;+ 
    3\left\|\frac{1}{M}\sum_{m=1}^{M}(w_{m}^{o,ele}(w_{m}^{o,ele})^\top-\Sigma)\right\|^2 + 3\left\|\frac{1}{M}\sum_{m=1}^{M}\mu\Phi_m\right\|^2 \label{equ: noise terms}
\end{align}
The fluctuation term in (\ref{equ: noise terms}) is stochastic when $w_{m}^{o,ele}$ is given. We can write $\boldsymbol{w}_m^{ele}$ as
\begin{equation}
    \boldsymbol{w}_m^{ele} = w_{m}^{o,ele} + \sqrt{\mu}\Phi_m^{\frac{1}{2}}\boldsymbol{g}_m, \;where\;\boldsymbol{g}_m| w_{m}^{o,ele} \sim \mathcal{N}(0,I_K)
\end{equation} It has the following bound:
\begin{align}
    &\mathbb{E}_{\boldsymbol{w}_m^{ele}|w_{m}^{o,ele}}\!\left\|\!\frac{1}{M}\sum_{m=1}^{M}\bigg( \boldsymbol{w}_m^{ele}(\boldsymbol{w}_m^{ele})^\top\!\!-\mathbb{E}[\boldsymbol{w}_m^{ele}(\boldsymbol{w}_m^{ele})^\top|w_{m}^{o,ele}]\bigg)\!\right\|^2\notag\\
    &= \mathbb{E}_{\boldsymbol{w}_m^{ele}|w_{m}^{o,ele}}\Biggl\|\frac{1}{M}\sum_{m=1}^{M}\mu\Phi_m^{\frac{1}{2}}(\boldsymbol{g}_m\boldsymbol{g}_m^\top-I_K)\phi_m^{\frac{1}{2}} \notag\\&\;\;\;\;+\frac{1}{M}\sum_{m=1}^{M}(w_{m}^{o,ele}\boldsymbol{g}_m^\top \sqrt{\mu}\Phi_m^{\frac{1}{2}}+\sqrt{\mu}\Phi_m^{\frac{1}{2}}\boldsymbol{g}_m(w_{m}^{o,ele})^\top)\Biggr\|^2\notag\\
    & \leq 2\mathbb{E}_{\boldsymbol{w}_m^{ele}|w_{m}^{o,ele}}\left\|\frac{1}{M}\sum_{m=1}^{M}\mu\Phi_m^{\frac{1}{2}}(\boldsymbol{g}_m\boldsymbol{g}_m^\top-I_K)\Phi_m^{\frac{1}{2}}\right\|^2  \notag\\&+2\mu\mathbb{E}_{\boldsymbol{w}_m^{ele}|w_{m}^{o,ele}}\left\|\frac{1}{M}\sum_{m=1}^{M}(w_{m}^{o,ele}\boldsymbol{g}_m^\top\Phi_m^{\frac{1}{2}}+\Phi_m^{\frac{1}{2}}\boldsymbol{g}_m(w_{m}^{o,ele})^\top)\right\|^2\notag\\
    &\overset{(a)}{\leq} \tau\mu^2\|\Phi\|^2\left(\frac{K}{M}+\frac{K^2}{M^2}\right)\notag\\&+2\mu\mathbb{E}_{\boldsymbol{w}_m^{ele}|w_{m}^{o,ele}}\left\|\frac{1}{M}\sum_{m=1}^{M}(w_{m}^{o,ele}\boldsymbol{g}_m^\top \Phi_m^{\frac{1}{2}}+\Phi_m^{\frac{1}{2}}\boldsymbol{g}_m(w_{m}^{o,ele})^\top)\right\|^2\label{error?}\notag\\
    &\overset{(b)}{\leq} \tau\mu^2\|\Phi\|^2\left(\frac{K}{M}+\frac{K^2}{M^2}\right) \notag\\&+ 
    \frac{4\mu}{M}\sum_{m=1}^{M}\left\|w_{m}^{o,ele}\right\|^2\mathbb{E}_{\boldsymbol{w}_m^{ele}|w_{m}^{o,ele}}\left\|\Phi_m^{\frac{1}{2}}\boldsymbol{g}_m\right\|^2\notag\\
    &= \tau\mu^2\|\Phi\|^2\left(\frac{K}{M}+\frac{K^2}{M^2}\right) + 4\mu\|{\scriptstyle \mathcal{W}}^{o}\|^2\operatorname{Tr}\left(\frac{\Phi}{M}\right),
\end{align}
where inequality (a) follows from standard sample-covariance concentration results~\cite{concentration,High-dimensional-probability}, for some absolute constant $\tau>0$. A similar argument applies to the second term in~\eqref{equ: noise terms}, yielding the bound under another absolute constant $\rho>0$. Inequality (b) follows from Jensen's inequality. Overall, we obtain
    \begin{align}   \mathbb{E}_{\boldsymbol{w}_m^{ele}|w_{m}^{o,ele}}&\left\|\widehat{\Sigma}_{\mathcal{N}_k,\mathcal{N}_k}-\Sigma_{\mathcal{N}_k,\mathcal{N}_k}\right\|^2 \notag\\
    \leq & \underbrace{\bigr(\tau\left\|\Phi\right\|^2+\rho\left\|L^{-1}\right\|^2\bigl)\left(\frac{K}{M}+\frac{K^2}{M^2}\right) }_{\text{covariance concentration}}\notag\\ 
    &+\, \underbrace{4\mu\|{\scriptstyle \mathcal{W}}^{o}\|^2\operatorname{Tr}\left(\frac{\Phi}{M}\right)+3\mu^2\left\|\operatorname{Tr_K}\left(\frac{\Phi}{M}\right)\right\|^2}_{\text{bias}}\\\label{covariance error bound}
    \triangleq& \,\,\epsilon.
    \end{align}

\paragraph{Inverse covariance estimation error}
Then we can bound the local inverse covariance estimation error.
By a standard inverse perturbation inequality, we have
\begin{align}
\mathbb{E}&_{\boldsymbol{w}^{\mathrm{ele}}_m\mid w^{o,\mathrm{ele}}_m}
\left\|
\widehat{\Gamma}_{k}-\Gamma_{k}
\right\|^{2}
\notag\\&\le
\mathbb{E}_{\boldsymbol{w}^{\mathrm{ele}}_m\mid w^{o,\mathrm{ele}}_m}
\!\left[
\left\|\Gamma_{k}\right\|^{2}\,
\left\|
\widehat{\Sigma}_{\mathcal{N}_k,\mathcal{N}_k}^{-1}
\right\|^{2}\,
\left\|
\widehat{\Sigma}_{\mathcal{N}_k,\mathcal{N}_k}
-
\Sigma_{\mathcal{N}_k,\mathcal{N}_k}
\right\|^{2}
\right]\notag\\
&\overset{(a)}{\le}
\frac{\|\Gamma_{k}\|^{4}}{(1-\zeta)^{2}}\,
\epsilon_k\\
&\overset{(b)}{\le}
\frac{\|L_{\mathcal{N}_k,\mathcal{N}_k}\|^{4}}{(1-\zeta)^{2}}\,
\epsilon_k.
\end{align}
Here, (a) applies Theorem~2.5 in~\cite{stewart1990stochastic} under the condition
\begin{equation}
    \|\Gamma_{k}\|\,
\bigl\|
\widehat{\Sigma}_{\mathcal{N}_k,\mathcal{N}_k}
-
\Sigma_{\mathcal{N}_k,\mathcal{N}_k}
\bigr\|
\le \zeta < 1.
\label{eq: condition}
\end{equation}
From the bound in~\eqref{equ: noise terms}, choosing $M$ sufficiently large and $\mu$ sufficiently small ensures that~\eqref{eq: condition} is satisfied. Step~(b) follows from the Schur complement identity in~\eqref{eq:schur_complement}, which implies
\(
\|\Gamma_k\|\leq \|L_{\mathcal{N}_k,\mathcal{N}_k}\|.
\)

By \eqref{eq:schur_complement}, the local Laplacian estimation error at node~$k$ satisfies
\begin{align}
\mathbb{E}_{\boldsymbol{w}^{\mathrm{ele}}_m\mid w^{o,\mathrm{ele}}_m}
\bigl\|
\widehat{L}^{\prime}_{k,\mathcal{N}_k}
-
L^{\prime}_{k,\mathcal{N}_k}
\bigr\|^{2}
&=
\mathbb{E}_{\boldsymbol{w}^{\mathrm{ele}}_m\mid w^{o,\mathrm{ele}}_m}
\bigl\|
\operatorname{S}(\widehat{\Gamma}_{k})
-
\operatorname{S}(\Gamma_{k})
\bigr\|^{2}
\nonumber\\
&\overset{(a)}{\le}
\mathbb{E}_{\boldsymbol{w}^{\mathrm{ele}}_m\mid w^{o,\mathrm{ele}}_m}
\bigl\|
\widehat{\Gamma}_{k}
-
\Gamma_{k}
\bigr\|^{2},
\end{align}
where (a) follows from the non-expansiveness of the subselection operator $\operatorname{S}(\cdot)$.

The global Laplacian estimation error can now be bounded as
\begin{align}
\mathbb{E}_{\boldsymbol{w}^{\mathrm{ele}}_m\mid w^{o,\mathrm{ele}}_m}
\|\widehat{L}-L\|^{2}
&\le
\mathbb{E}_{\boldsymbol{w}^{\mathrm{ele}}_m\mid w^{o,\mathrm{ele}}_m}
\|\widehat{L}-L\|_F^{2}
\label{eq:L_spec_to_Frob}\\
&=
\mathbb{E}_{\boldsymbol{w}^{\mathrm{ele}}_m\mid w^{o,\mathrm{ele}}_m}
\bigl\|
\tfrac12\bigl(
\widehat{L}^{\prime}
+
(\widehat{L}^{\prime})^\top
\bigr)
-
L
\bigr\|_F^{2}
\nonumber\\
&\overset{(a)}{\le}
\mathbb{E}_{\boldsymbol{w}^{\mathrm{ele}}_m\mid w^{o,\mathrm{ele}}_m}
\|
\widehat{L}^{\prime}
-
L
\|_F^{2}
\notag\\
&=
\sum_{k=1}^{K}
\mathbb{E}_{\boldsymbol{w}^{\mathrm{ele}}_m\mid w^{o,\mathrm{ele}}_m}
\bigl\|
\widehat{L}^{\prime}_{k,\mathcal{N}_k}
-
L^{\prime}_{k,\mathcal{N}_k}
\bigr\|^{2}
\nonumber\\
&\le
\sum_{k=1}^{K}
\frac{\|L_{\mathcal{N}_k,\mathcal{N}_k}\|^{4}}{(1-\zeta)^{2}}\,
\epsilon_k.
\label{eq:full_L_error_bound}
\end{align}

In step~(a), we used the non-expansiveness of the symmetrization operator under the Frobenius norm,
i.e., for any matrix $X$,
\(
\left\|\tfrac12(X+X^\top)\right\|_F \le \left\|X\right\|_F
\),
together with the symmetry of the true Laplacian~$L$.

\section*{Proof of Lemma~\ref{lemma:MSD bound}}\label{proof: MSD}

\noindent In~\cite{Adaptation,stefan_reg2}, the network MSD performance is characterized under the assumption that the Laplacian is a symmetric positive semidefinite matrix with the all-ones vector in its nullspace, i.e., a combinatorial graph Laplacian. In the present setting, however, the Laplacian is symmetric positive definite. As a result, the argument in~\cite{Adaptation,stefan_reg2} does not apply directly, and we therefore provide a modified proof below.

From \eqref{eq:DGD_recursion} and \eqref{eq:multitask_cost}, the diffusion error recursion can be written as
\begin{align}
    {\scriptstyle \mathcal{W}}^o_{\widehat{L}} - \boldsymbol{{\scriptstyle \mathcal{W}}}_{\widehat{L},i} &= (I-\mu^\prime\eta\widehat{\mathcal{L}})\bigl({\scriptstyle \mathcal{W}}^o_{\widehat{L}}-\boldsymbol{{\scriptstyle \mathcal{W}}}_{\widehat{L},i-1}+\mu^\prime\nabla J(\boldsymbol{{\scriptstyle \mathcal{W}}}_{\widehat{L},i-1})\notag\\ &\;\;\;\;+\mu^\prime \boldsymbol{s}_i(\boldsymbol{{\scriptstyle \mathcal{W}}}_{\widehat{L},i-1})\bigr)+ \mu^\prime\eta\mathcal{L}{\scriptstyle \mathcal{W}}^o_{\widehat{L}}.
\end{align}
Applying the mean-value theorem to the gradient term, and using the first-order optimality condition at ${\scriptstyle \mathcal{W}}^o_{\widehat{L}}$, we obtain
\begin{align}
    \widetilde{\boldsymbol{{\scriptstyle \mathcal{W}}}}_{\widehat{L},i}
    &= (I-\mu^\prime\eta\mathcal{\widehat{L}})(I-\mu^\prime\boldsymbol{\mathcal{H}}_{i-1})\widetilde{\boldsymbol{{\scriptstyle \mathcal{W}}}} _{\widehat{L},i-1} \notag\\&\;\;\;\;+\mu^\prime(I-\mu^\prime\eta\mathcal{\widehat{L}})\boldsymbol{s}_i(\boldsymbol{{\scriptstyle \mathcal{W}}}_{\widehat{L},i-1}) + \mu^{\prime2}\eta^2\mathcal{\widehat{L}}^2{\scriptstyle \mathcal{W}}^o_{L}\label{eq:error recursion}
\end{align}
where $\boldsymbol{\mathcal{H}}_{i-1}\triangleq\operatorname{diag}\{\boldsymbol{H}_{1,i-1}, \cdots,\boldsymbol{H}_{K,i-1}\}$ with
\begin{equation}
    \boldsymbol{H}_{k,i-1} = \int_{0}^{1} \nabla^{2} J_k\!\left({\scriptstyle \mathcal{W}}^o_{k,\widehat{L}} - t\,({\scriptstyle \mathcal{W}}^o_{k,\widehat{L}} - \boldsymbol{w}_{k,i-1})\right)\, dt.
\end{equation}

To facilitate the analysis, we next introduce the long-term model from~\cite{stefan_reg2,Adaptation}, which replaces the time-varying Hessian matrix in \eqref{eq:error recursion} by its limiting value:
\begin{align}
  \widetilde{\boldsymbol{{\scriptstyle \mathcal{W}}}}^{\prime}_{\widehat{L},i}
    &= (I-\mu^\prime\eta\mathcal{\widehat{L}})(I-\mu^\prime\mathcal{H})\widetilde{\boldsymbol{{\scriptstyle \mathcal{W}}}}^{\prime}_{\widehat{L},i-1} \notag\\&\;\;\;\;+\mu^\prime(I-\mu^\prime\eta\mathcal{\widehat{L}})\boldsymbol{s}_i(\boldsymbol{{\scriptstyle \mathcal{W}}}_{\widehat{L},i-1}) + \mu^{\prime2}\eta^2\mathcal{\widehat{L}}^2{\scriptstyle \mathcal{W}}^o_{\widehat{L}}\label{eq:long-term recursion}
\end{align}
where $\mathcal{H}\triangleq\operatorname{diag}\{H_{1}, \cdots,H_{K}\}$ and $H_k\triangleq\nabla^2J_k\left(w^o_{\widehat{L},k}\right)$. Define
\begin{equation}
    \mathcal{B}\triangleq(I-\mu^\prime\eta\mathcal{\widehat{L}})(I-\mu^\prime\mathcal{H}),
\qquad
\mathcal{A}\triangleq(I-\mu^\prime\eta\mathcal{\widehat{L}}).
\end{equation}
Squaring both sides of \eqref{eq:long-term recursion} and conditioning on the filtration $\boldsymbol{\mathcal{F}}_{i-1}$ gives
\begin{align}
\mathbb{E}\!\left[\left\|\widetilde{\boldsymbol{{\scriptstyle \mathcal{W}}}}^\prime_{\widehat{L},i}\right\|^2 \,\middle|\, \boldsymbol{\mathcal{F}}_{i-1}\right]
&=
\mathbb{E}\left\|\mathcal{B}\widetilde{\boldsymbol{{\scriptstyle \mathcal{W}}}}^\prime_{\widehat{L},i-1}\right\|^2 \notag\\
+&\mu^{\prime 2}\,
\mathbb{E}\!\left[
\left\|\mathcal{A}\boldsymbol{s}_i(\boldsymbol{{\scriptstyle \mathcal{W}}}_{\widehat{L},i-1})\right\|^2
\,\middle|\, \boldsymbol{\mathcal{F}}_{i-1}
\right]
+\operatorname{O}(\mu^{\prime2}).
\end{align}
The cross terms involving $\boldsymbol{s}_i(\boldsymbol{{\scriptstyle \mathcal{W}}}_{\widehat{L},i-1})$ vanish because the gradient noise is conditionally zero mean. The terms involving $\mu^{\prime 2}\eta^2 \mathcal{\widehat{L}}^2 {\scriptstyle \mathcal{W}}_L^o$ are collected into the higher-order remainder $\operatorname{O}(\mu^{\prime 2})$.

Taking expectation on both sides yields
\begin{align}
\mathbb{E}\left\|\widetilde{\boldsymbol{{\scriptstyle \mathcal{W}}}}^\prime_{\widehat{L},i}\right\|^2&
=
\mathbb{E}\left\|\mathcal{B}\widetilde{\boldsymbol{{\scriptstyle \mathcal{W}}}}^\prime_{\widehat{L},i-1}\right\|^2
+\mu^{\prime 2}\,
\mathbb{E}
\left\|\mathcal{A}\boldsymbol{s}_i(\boldsymbol{{\scriptstyle \mathcal{W}}}^\prime_{\widehat{L},i-1})\right\|^2\notag\\
&\quad
+\operatorname{O}(\mu^{\prime2}).\\
\leq
\left\|\mathcal{B}\right\|^2&\mathbb{E}\left\|\widetilde{\boldsymbol{{\scriptstyle \mathcal{W}}}}^\prime_{\widehat{L},i-1}\right\|^2
+\mu^{\prime 2}\,\mathbb{E}\left[
\operatorname{Tr}\left(\mathcal{A}^\top\mathcal{A}\mathcal{R}_{s,i}(\boldsymbol{{\scriptstyle \mathcal{W}}}_{\widehat{L},i-1})\right)\right]\notag\\
&\quad
+\operatorname{O}(\mu^{\prime2}).
\end{align}

We next verify that this recursion is stable. For stability, it is sufficient to ensure that
\[
\|\mathcal{B}\|=\|(I-\mu^\prime\eta\mathcal{\widehat{L}})(I-\mu^\prime\mathcal{H})\|<1.
\]
First, whenever $0\le\mu^\prime\eta\le\frac{2}{\lambda_{\text{max}}(\mathcal{\widehat{L}})}$, we have
\begin{equation}
    \left\|I-\mu\eta\mathcal{L}\right\|\leq1.
\end{equation}
Second, by the bounded-Hessian assumption,
\begin{equation}
    \gamma_{\operatorname{max}}\triangleq\|I-\mu^\prime \mathcal{H}\|=\max \{|1-\mu^\prime \nu|,|1-\mu^\prime\delta|\}.\label{eq: expansion}
\end{equation}
Thus, for a sufficiently small choice of $\mu^\prime$, we have $\gamma_{\operatorname{max}}\leq1$, which implies $0<\|\mathcal{B}\|<1$.

It follows that
\begin{align}
    \limsup_{i \to \infty}\mathbb{E}\|\widetilde{\boldsymbol{{\scriptstyle \mathcal{W}}}}^\prime_{\widehat{L},i}\|^2
    &\leq\frac{\mu^{\prime2}\operatorname{Tr}\left((I-\mu^\prime\eta\mathcal{\widehat{L}})^2\mathcal{R}_s\right)+\operatorname{O}\left(\mu^{\prime2}\right)}{1-\left\|(I-\mu^\prime\eta\mathcal{\widehat{L}})(I-\mu^\prime\mathcal{H})\right\|^2}\\
    &\overset{(a)}{\leq}\frac{\mu^{\prime2}\operatorname{Tr}\left((I-\mu^\prime\eta\mathcal{\widehat{L}})^2\mathcal{R}_s\right)+\operatorname{O}\left(\mu^{\prime2}\right)}{\min\{2\mu^\prime\nu+\mu^{\prime2}\nu^2,\notag\;\;2\mu^\prime\delta+\mu^{\prime2}\delta^2\}}\\
    &=\frac{\mu^{\prime}\operatorname{Tr}\left((I-\mu^\prime\eta\mathcal{\widehat{L}})^2\mathcal{R}_s\right)}{\min\{2\nu+\mu^{\prime}\nu^2,\notag\;\;2\delta+\mu^{\prime}\delta^2\}}+\operatorname{O}\left(\mu^{\prime}\right)
\end{align}
where step (a) uses the expansion in \eqref{eq: expansion}.

Finally, Lemma 1 in \cite{stefan_reg2} states that
\begin{equation}
    \limsup_{i \to \infty}\mathbb{E}\|\widetilde{\boldsymbol{{\scriptstyle \mathcal{W}}}}^\prime_{\widehat{L},i}\|^2 = \limsup_{i \to \infty}\mathbb{E}\|\widetilde{\boldsymbol{{\scriptstyle \mathcal{W}}}}_{\widehat{L},i}\|^2 + \operatorname{O}\left(\mu^{\prime\frac{3}{2}}\right).
\end{equation}
Therefore, for sufficiently small $\mu^{\prime}$, we conclude that
\begin{equation}
    \limsup_{i \to \infty}\mathbb{E}\|\widetilde{\boldsymbol{{\scriptstyle \mathcal{W}}}}_{\widehat{L},i}\|^2\leq\frac{\mu^{\prime}\operatorname{Tr}\left((I-\mu^\prime\eta\mathcal{\widehat{L}})^2\mathcal{R}_s\right)}{\min\{2\nu+\mu^{\prime}\nu^2,\notag\;\;2\delta+\mu^{\prime}\delta^2\}}+\operatorname{O}\left(\mu^{\prime}\right).
\end{equation}

\section*{Proof of Lemma~\ref{lemma:Laplacian-induced bias}}\label{proof:L-induced bias}
\noindent According to the bounded Hessian Assumption~\ref{assumption:Risk function}, the Hessian of
\(
\mathcal{J}_{\widehat{L}}({\scriptstyle \mathcal{W}})
\triangleq
\mathcal{J}({\scriptstyle \mathcal{W}})
+\frac{\eta}{2}\,{\scriptstyle \mathcal{W}}^\top \mathcal{\widehat{L}}\,{\scriptstyle \mathcal{W}}
\)
satisfies
\begin{align}
\nabla^2\mathcal{J}_{\widehat{L}}({\scriptstyle \mathcal{W}})
&= \nabla^2\mathcal{J}({\scriptstyle \mathcal{W}}) + \eta \widehat{\mathcal{L}}
\label{eq:hess_1}\\
&\geq \nu I_{MK} + \eta \widehat{\mathcal{L}}
\label{eq:hess_2}\\
&\geq \bigl(\nu + \eta \lambda_{\min}(\widehat{\mathcal{L}})\bigr) I_{MK}.
\label{eq:hess_3}
\end{align}
Let \(\lambda_{\widehat{L}} \triangleq \nu + \eta \lambda_{\min}(\widehat{\mathcal{L}})\).
By the strong convexity of \(\mathcal{J}_{\widehat{L}}({\scriptstyle \mathcal{W}})\), we obtain
\begin{align}
\bigl\|\Delta{\scriptstyle \mathcal{W}}_{\widehat{L}}^o\bigr\|^2
&\le
\frac{1}{\lambda_{\widehat{L}}}
\big[\nabla \mathcal{J}_{\widehat{L}}({\scriptstyle \mathcal{W}}^o_{L})
-\nabla \mathcal{J}_{\widehat{L}}({\scriptstyle \mathcal{W}}^o_{\widehat{L}})\big]^\top
({\scriptstyle \mathcal{W}}^o_{L}-{\scriptstyle \mathcal{W}}^o_{\widehat{L}})
\notag\\
&\overset{(a)}{=}
\frac{1}{\lambda_{\widehat{L}}}
\big[\nabla \mathcal{J}({\scriptstyle \mathcal{W}}^o_{L})
+\eta\widehat{\mathcal{L}}\,{\scriptstyle \mathcal{W}}^o_{L}\big]^\top
({\scriptstyle \mathcal{W}}^o_{L}-{\scriptstyle \mathcal{W}}^o_{\widehat{L}})
\notag\\
&\overset{(b)}{=}
\frac{1}{\lambda_{\widehat{L}}}
\big[-\eta\mathcal{L}\,{\scriptstyle \mathcal{W}}^o_{L}
+\eta\widehat{\mathcal{L}}\,{\scriptstyle \mathcal{W}}^o_{L}\big]^\top
({\scriptstyle \mathcal{W}}^o_{L}-{\scriptstyle \mathcal{W}}^o_{\widehat{L}})
\notag\\
&\overset{(c)}{\le}
\frac{\eta}{\lambda_{\widehat{L}}}
\|\widehat{L}-L\|\,
\|{\scriptstyle \mathcal{W}}^o_{L}\|\,
\bigl\|\Delta{\scriptstyle \mathcal{W}}_{\widehat{L}}^o\bigr\|.
\label{eq:bias_81}
\end{align}
Here, (a) uses the optimality condition
\(\nabla \mathcal{J}_{\widehat{L}}({\scriptstyle \mathcal{W}}^o_{\widehat{L}})=0\);
(b) uses \(\nabla \mathcal{J}({\scriptstyle \mathcal{W}}^o_{L})+\eta\mathcal{L}{\scriptstyle \mathcal{W}}^o_{L}=0\);
and (c) follows from the Cauchy--Schwarz inequality.
Cancelling \(\bigl\|\Delta{\scriptstyle \mathcal{W}}_{\widehat{L}}^o\bigr\|\) on both sides of~\eqref{eq:bias_81} yields
\begin{equation}
\bigl\|\Delta{\scriptstyle \mathcal{W}}_{\widehat{L}}^o\bigr\|^{2}
\le
\frac{\eta^{2}}{\lambda_{\widehat{L}}^{2}}\,
\|\widehat{L}-L\|^{2}\,
\|{\scriptstyle \mathcal{W}}^o_{L}\|^{2}.
\end{equation}

\bibliographystyle{IEEEtran}
\bibliography{sample}













\end{document}